\def\eqref#1{equation~\ref{#1}}
\def\1{\bm{1}}
\DeclareMathAlphabet{\mathsfit}{\encodingdefault}{\sfdefault}{m}{sl}
\SetMathAlphabet{\mathsfit}{bold}{\encodingdefault}{\sfdefault}{bx}{n}
\newcommand{\note}[1]{\textcolor{black}{#1}}
\newcommand{\combinemethod}{verbalized-consistency confidence } 
\newcommand{\misleading}{Induced Consistency Confidence }
\newcommand{\highlight}[1]{\textbf{#1}}
\newcommand{\greyhighlight}[1]{\colorbox{gray!20}{#1}} 
\newcommand{\redhighlight}[1]{\textcolor{red}{#1}}
\title{Can LLMs Express Their Uncertainty? \\ An Empirical Evaluation of Confidence Elicitation in LLMs}
\author{%
  Miao Xiong$^{1}$\thanks{Corresponding to: Miao Xiong (miao.xiong@u.nus.edu).}, \,  Zhiyuan Hu$^{1}$, Xinyang Lu$^{1}$,   Yifei Li$^{3}$,  \textbf{Jie Fu}$^{2}$,   \textbf{Junxian He}$^{2}$\thanks{Equal advising: bhooi@comp.nus.edu.sg, junxianh@cse.ust.hk}\footnotemark[2], \,    \textbf{Bryan Hooi}$^{1}$\footnotemark[2] \\
  $^1$ National University of Singapore \,
  $^2$ The Hong Kong University of Science and Technology \\
  $^3$ École Polytechnique Fédérale de Lausanne \\
}
\theoremstyle{plain}
\newtheorem{theorem}{Theorem}[section]
\newtheorem{proposition}[theorem]{Proposition}
\theoremstyle{definition}
\theoremstyle{remark}
\begin{document}

\maketitle

\vspace{-1mm}
\begin{abstract}
Empowering large language models (LLMs) to accurately express confidence in their answers is essential for reliable and trustworthy decision-making. Previous confidence elicitation methods, which primarily rely on \emph{white-box access} to internal model information or model fine-tuning, have become less suitable for LLMs, especially closed-source commercial APIs. This leads to a growing need to explore the untapped area of \emph{black-box} approaches for LLM uncertainty estimation. To better break down the problem, we define a systematic framework with three components: \emph{prompting} strategies for eliciting verbalized confidence, \emph{sampling} methods for generating multiple responses, and \emph{aggregation} techniques for computing consistency. We then benchmark these methods on two key tasks—confidence calibration and failure prediction—across five types of datasets (e.g., commonsense and arithmetic reasoning) and five widely-used LLMs including GPT-4 and LLaMA 2 Chat. 
Our analysis uncovers several key insights: 1) LLMs, when verbalizing their confidence, tend to be \emph{overconfident}, potentially imitating human patterns of expressing confidence. 2) As model capability scales up, both calibration and failure prediction performance improve, yet still far from ideal performance. 
3) Employing our proposed strategies, such as human-inspired prompts, consistency among multiple responses, and better aggregation strategies can help mitigate this overconfidence from various perspectives. 
4) Comparisons with white-box methods indicate that while white-box methods perform better, the gap is narrow, e.g., 0.522 to 0.605 in AUROC.
Despite these advancements, none of these techniques consistently outperform others, and all investigated methods struggle in challenging tasks, such as those requiring professional knowledge, indicating significant scope for improvement. 
We believe this study can serve as a strong baseline and provide insights for eliciting confidence in black-box LLMs. The code is publicly available at \href{https://github.com/MiaoXiong2320/llm-uncertainty}{https://github.com/MiaoXiong2320/llm-uncertainty}.

\end{abstract}

\vspace{-1mm}
\section{Introduction}
\label{sec:intro}
\vspace{-1mm}

A key aspect of human intelligence lies in our capability to meaningfully \emph{express and communicate our uncertainty} in a variety of ways~\citep{humansprobability}.
Reliable uncertainty estimates are crucial for human-machine collaboration, enabling more rational and informed decision-making~\citep{guo2017calibration,trustworthy}.
Specifically, accurate confidence estimates of a model can provide valuable insights into the reliability of its responses, facilitating risk assessment and error mitigation~\citep{kuleshov2018accurate,kuleshov2022calibrated}, selective generation~\citep{selective-generation}, and reducing hallucinations in natural language generation tasks~\citep{xiao2021hallucination}.

In the existing literature, eliciting confidence from machine learning models has predominantly relied on \emph{white-box access} to internal model information, such as token-likelihoods~\citep{nlpuncertainty,mostlyknow} and associated calibration techniques~\citep{jiang2021can}, as well as model fine-tuning~\citep{lin2022teaching}. However, with the prevalence of large language models, these methods are becoming less suitable for several reasons: 
1) The rise of closed-source LLMs with commercialized APIs, such as GPT-3.5~\citep{OpenAIChatGPT} and GPT-4~\citep{openai2023gpt4}, which only allow textual inputs and outputs, lacking access to token-likelihoods or embeddings; 
2) Token-likelihood primarily captures the model's uncertainty about the next token~\citep{semantic-entropy}, rather than the semantic probability inherent in textual meanings. For example, in the phrase ``Chocolate milk comes from brown cows", every word fits naturally based on its surrounding words, but high individual token likelihoods do not capture the falsity of the overall statement, which requires examining the statement semantically, in terms of its claims; 3) Model fine-tuning demands substantial computational resources, which may be prohibitive for researchers with lower computational resources. Given these constraints, there is a growing need to explore \emph{black-box} approaches for eliciting the confidence of LLMs in their answers, a task we refer to as \emph{confidence elicitation}. 

Recognizing this research gap, our study aims to contribute to the existing knowledge from two perspectives: 1) explore \emph{black-box} methods for confidence elicitation, and 2) conduct a comparative analysis to shed light on methods and directions for eliciting more accurate confidence. To achieve this, we define a systematic framework with three components: \textbf{prompting} strategies for eliciting verbalized confidence, \textbf{sampling} strategies for generating multiple responses, and \textbf{aggregation} strategies for computing the consistency. For each component, we devise a suite of methods. By integrating these components, we formulate a set of algorithms tailored for confidence elicitation. A comprehensive overview of the framework is depicted in \figureautorefname{~\ref{fig:framework}}.
We then benchmark these methods on two key tasks—confidence calibration and failure prediction—across five types of tasks (Commonsense, Arithmetic, Symbolic, Ethics and Professional Knowledge) and five widely-used LLMs, i.e., GPT-3 \citep{brown2020language}, GPT-3.5 \citep{OpenAIChatGPT}, GPT-4, Vicuna \citep{vicuna2023} and LLaMA 2~\citep{llama2}.

Our investigation yields several observations: 
1) LLMs tend to be highly overconfident when verbalizing their confidence, posing potential risks for the safe deployment of LLMs (\textsection{\ref{sec:exp-overconfident}}). Intriguingly, the verbalized confidence values predominantly fall within the 80\% to 100\% range and are typically in multiples of 5, similar to how humans talk about confidence. In addition, while scaling model capacity leads to performance improvement, the results remain suboptimal.
2) Prompting strategies, inspired by patterns observed in human dialogues, can mitigate this overconfidence, but the improvement also diminishes as the model capacity scales up (\textsection{\ref{sec:exp-prompt-strategies}}). Furthermore, while the calibration error (e.g. ECE) can be significantly reduced using suitable prompting strategies, failure prediction still remains a challenge.
3) Our study on sampling and aggregation strategies indicates their effectiveness in improving failure prediction performance (\textsection{\ref{sec:exp-consistency}}).
4) A detailed examination of aggregation strategies reveals that they cater to specific performance metrics, i.e., calibration and failure prediction, and can be selected based on desired outcomes (\textsection{\ref{sec:exp-aggregation}}).
5) Comparisons with white-box methods indicate that while white-box methods perform better, the gap is narrow, e.g., 0.522 to 0.605 in AUROC (\textsection{\ref{append_sec:white_box}}).
Despite these insights, it is worth noting that the methods introduced herein still face challenges in failure prediction, especially with tasks demanding specialized knowledge (\textsection{\ref{sec:conclusion}}). This emphasizes the ongoing need for further research and development in confidence elicitation for LLMs.

\vspace{-4mm}
\section{Related Works}
\vspace{-2mm}
\textbf{Confidence Elicitation in LLMs.} Confidence elicitation is the process of estimating LLM's confidence in their responses without model fine-tuning or accessing internal information. Within this scope, \citet{lin2022teaching} introduced the concept of verbalized confidence that prompts LLMs to express confidence directly.  However, they mainly focus on fine-tuning on specific datasets where the confidence is provided, and its zero-shot verbalized confidence is unexplored.  Other approaches, like the external calibrator from \citet{reducingoverconfidence}, depend on internal model representations, which are often inaccessible. While \citet{navigating} examines the impact of confidence, it does not provide direct confidence scores to users. Our work aligns most closely with the concurrent study by \citet{justaskforcalibration}, which mainly focuses on the use of prompting strategies. Our approach diverges by aiming to explore a broader method space, and propose a comprehensive framework for systematically evaluating various strategies and their integration. We also consider a wider range of models beyond those RLHF-LMs  examined in concurrent research, thus broadening the scope of confidence elicitation. Our results reveal persistent challenges across more complex tasks and contribute to a holistic  understanding of confidence elicitation. For a more comprehensive discussion of the related works, kindly refer to Appendix \ref{sec-append:related_works}.

\begin{figure}
    \centering
    \includegraphics[width=0.98\linewidth]{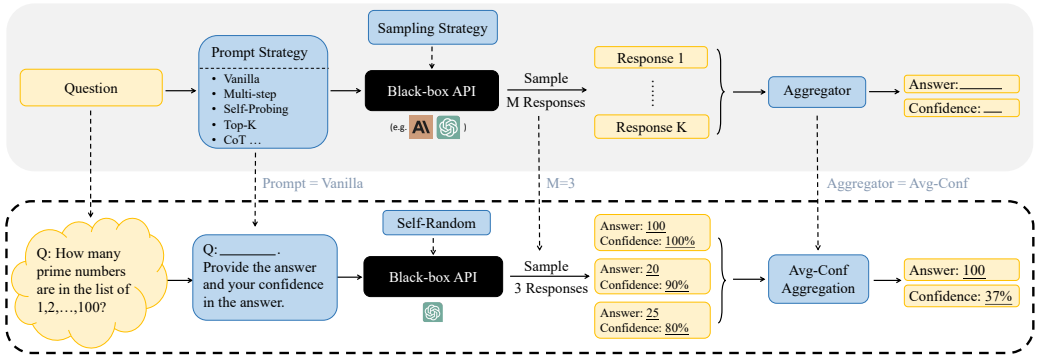}
    \caption{An Overview and example of Confidence Elicitation framework, which consists of three components: prompt, sampling and aggregator. By integrating distinct strategies from each component, we can devise different algorithms, e.g., Top-K~\citep{justaskforcalibration} is formulated using Top-K prompt, self-random sampling with $M=1$, and Avg-Conf aggregation.
    Given an input question, we first choose a suitable \emph{prompt} strategy, e.g., the vanilla prompt used here. Next, we determine the number of samples to generate ($M=3$ here) and \emph{sampling} strategy, and then choose an \emph{aggregator} based on our preference (e.g. focus more on improving calibration or failure prediction) to compute confidences in its potential answers. The highest confident answer is selected as the final output. }
    \label{fig:framework}
    \vspace{-3mm}
\end{figure}

\vspace{-1mm}
\section{Exploring Black-box Framework for Confidence Elicitation}
\vspace{-1mm}
\label{sec:method}

In our pursuit to explore black-box approaches for eliciting confidence, we investigated a range of methods and discovered that they can be encapsulated within a unified framework. This framework, with its three pivotal components, offers a variety of algorithmic choices that combine to create diverse algorithms with different benefits for confidence elicitation. In our later experimental section (\textsection{\ref{sec:exp}}), we will analyze our proposed strategies within each component, aiming to shed light on the best practices for eliciting confidence in black-box LLMs.

\subsection{Motivation of The Framework}
\vspace{-1mm}
\label{sec:verbal-confidence}
 \textbf{Prompting strategy.} The key question we aim to answer here is: in a black-box setting, what form of model inputs and outputs lead to the most accurate confidence estimates? This parallels the rich study in eliciting confidences from \emph{human} experts: for example, patients often inquire of doctors about their confidence in the potential success of a surgery. We refer to this goal as verbalized confidence, and inspired by strategies for human elicitation, we design a series of human-inspired prompting strategies to elicit the model's verbalized confidence. We then unify these prompting strategies as a building block of our framework~(\textsection{\ref{sec:prompting-strategy}}). In addition, beyond its simplicity, this approach also offers an extra benefit over model's token-likelihood: the verbalized confidence is intrinsically tied to the semantic meaning of the answer instead of its syntactic or lexical form~\citep{semantic-entropy}. 
 
\vspace{-1mm}
\textbf{Sampling and Aggregation.}
In addition to the direct insights from model outputs, the variance observed among multiple responses for a given question offers another valuable perspective on model confidence. This line of thought aligns with the principle extensively explored in prior white-box access uncertainty estimation methodologies for classification~\citep{uncertaintysurvey}, such as MCDropout~\citep{gal2016dropout} and Deep Ensemble~\citep{deepensemble}. The challenges in adapting ensemble-based methods lie in two critical components: 1) the \emph{sampling strategy}, i.e., how to sample multiple responses from the model's answer distribution, and 2) the \emph{aggregation strategy}, i.e., how to aggregate these responses to yield the final answer and its associated confidence. To optimally harness both textual output and response variance, we have integrated them within a unified framework.

\begin{table*}[t]
\centering
    \caption{Illustration of the prompting strategy (the complete prompt in \appendixautorefname{~\ref{sec:prompt}}). To help models understand the concept of confidence, we also append the explanation ``Note: The confidence indicates how likely you think your answer is true." to every prompt. }
    \vspace{-1mm}
\resizebox{0.99\linewidth}{!}{
\begin{tabular}{l p{13cm}} %
\toprule
 \textbf{Method} & \textbf{Prompt}   \\ \midrule
 Vanilla & Read the question, provide your answer, and \redhighlight{your confidence} in this answer. \\ \midrule
 CoT & Read the question, \redhighlight{analyze step by step}, provide your answer and your confidence in this answer. \\  \midrule
 Self-Probing & Question: [...] \redhighlight{Possible Answer}: [...] \redhighlight{Q: How likely is the above answer to be correct?} Analyze the possible answer, provide your reasoning concisely, and \redhighlight{give your confidence in this answer}. \\ \midrule
 Multi-Step & Read the question, \redhighlight{break down the problem into K steps, think step by step, give your confidence in each step,} and then derive your final answer and your confidence in this answer. \\ \midrule
 Top-K &  Provide your \redhighlight{$K$ best guesses and the probability that each is correct (0\% to 100\%)} for the following question. \\
 
 \bottomrule
 \end{tabular}
 }
 \vspace{-3mm}
 \label{tab:prompt-illustration}
 \end{table*}

\vspace{-2mm}
\subsection{Prompting Strategy}
\vspace{-1mm}
\label{sec:prompting-strategy}

Drawing inspiration from patterns observed in human dialogues, we design a series of human-inspired prompting strategies to tackle challenges, e.g., overconfidence, that are inherent in the vanilla version of verbalized confidence. See Table~\ref{tab:prompt-illustration} for an overview of these prompting strategies and \appendixautorefname{~\ref{sec:prompt}} for complete prompts.

\textbf{CoT.} Considering that a better comprehension of a problem can lead to a more accurate understanding of one's certainty, we adopt a reasoning-augmented prompting strategy. In this paper, we use zero-shot Chain-of-Thought, CoT~\citep{zero-shot-cot} for its proven efficacy in inducing reasoning processes and improving model accuracy across diverse datasets. Alternative strategies such as plan-and-solve~\citep{plan-and-solve} can also be used.

\textbf{Self-Probing.} A common observation of humans is that they often find it easier to identify errors in others' answers than in their own, as they can become fixated on a particular line of thinking, potentially overlooking mistakes. Building on this assumption, we investigate if a model's uncertainty estimation improves when given a question and its answer, then asked, \emph{``How likely is the above answer to be correct"?} The procedure involves generating the answer in one chat session and obtaining its verbalized confidence in another independent chat session.

\textbf{Multi-Step.} Our preliminary study shows that LLMs tend to be overconfident when verbalizing their confidence (see Figure \ref{fig:distribution-verbalized-confidence}). To address this, we explore whether dissecting the reasoning process into steps and extracting the confidence of each step can alleviate the overconfidence. The rationale is that understanding each reasoning step's confidence could help the model identify potential inaccuracies and quantify their confidence more accurately. Specifically, for a given question, we prompt models to delineate their reasoning process into individual steps $S_i$ and evaluate their confidence in the correctness of this particular step, denoted as $C_i$. The overall verbalized confidence is then derived by aggregating the confidence of all steps: $C_{\text{multi-step}} = \prod_{i=1}^{n} C_i$, where $n$ represents the total number of reasoning steps.

\textbf{Top-K.} Another way to alleviate overconfidence is to realize the existence of multiple possible solutions or answers, which acts as a normalization for the confidence distribution. Motivated by this, Top-K~\citep{justaskforcalibration} prompts LLMs to generate the top $K$ guesses and their corresponding confidence for a given question.

\vspace{-3mm}
\subsection{Sampling Strategy}
\label{sec:sampling-strategy}
\vspace{-2mm}
Several methods can be employed to elicit multiple responses of the same question from the model: 1) \textbf{Self-random}, leveraging the model's inherent randomness by \emph{inputting the same prompt multiple times}.
The temperature, an adjustable parameter, can be used to calibrate the predicted token distribution, i.e., adjust the diversity of the sampled answers. An alternative choice is to \emph{introduce perturbations in the questions}: 2) \textbf{Prompting}, by paraphrasing the questions in different ways to generate multiple responses. 3) \textbf{Misleading}, feeding \emph{misleading} cues to the model, e.g.,``I think the answer might be ...". This method draws inspiration from human behaviors: when confident, individuals tend to stick to their initial answers despite contrary suggestions; conversely, when uncertain, they are more likely to waver or adjust their responses based on misleading hints. Building on this observation, we evaluate the model's response to misleading information  to gauge its uncertainty. See \tableautorefname{~\ref{tab:hints}} for the complete prompts.

\vspace{-2mm}
\subsection{Aggregation Strategy}
\vspace{-1mm}

\textbf{Consistency.} A natural idea of aggregating different answers is to measure the degree of agreement among the candidate outputs and integrate the inherent uncertainty in the model's output.

For any given question and an associated answer $\tilde{Y}$, we sample a set of \emph{candidate answers} $\hat{Y}_i$, where $i \in \{1,..., M\}$. 
The agreement between these candidate responses and the original answer then serves as a measure of confidence, computed as follows:
\begin{equation}
C_{\operatorname{consistency}} = \frac{1}{M} \sum_{i=1}^M \mathbb{I}\{ \hat{Y}_i = \tilde{Y}\}.
\end{equation}
\textbf{Avg-Conf.} The previous aggregation method does not utilize the available information of verbalized confidence. It is worth exploring the potential synergy between these uncertainty indicators, i.e., whether the verbalized confidence and the consistency between answers can complement one another. For any question and an associated answer $\tilde{Y}$, we sample a candidate set $\{ \hat{Y}_1, ... \hat{Y}_M \} $ with their corresponding verbalized confidence $\{ C_1, ... C_M \}$, and compute the confidence as follows: %
\begin{equation}
C_{\operatorname{conf}} = \frac{ \sum_{i=1}^M \mathbb{I}\{ \hat{Y}_i = \tilde{Y}\}\times C_i}{\sum_{i=1}^M C_i}.
\end{equation}
\textbf{Pair-Rank.} 
This aggregation strategy is tailored for responses generated using the Top-K prompt, as it mainly utilizes the ranking information of the model's Top-K guesses. The underlying assumption is that the model's ranking between two options may be more accurate than the verbalized confidence it provides, especially given our observation that the latter tends to exhibit overconfidence.

Given a question with $N$ candidate responses, the $i\text{-th}$ response consists of $K$ sequentially ordered answers, denoted as $\mathcal{S}^{(i)}_K = (S_1^{(i)}, S_2^{(i)}, \dots, S_K^{(i)})$. Let $\mathcal{A}$ represent the set of unique answers across all $N$ responses, where $M$ is the total number of distinct answers. 
The event where the model ranks answer $S_u$ above $S_v$ (i.e., $S_u$ appears before $S_v$) in its $i$-th generation is represented as $(S_u \stackrel{\scriptstyle(i)}{\succ} S_v)$. In contexts where the generation is implicit, this is simply denoted as $(S_u \succ S_v)$. Let $E_{uv}^{(i)}$ be the event where at least one of $S_u$ and $S_v$ appears in the $i$-th generation. Then the probability of $(S_u \succ S_v)$, conditional on $E_{uv}^{(i)}$ and a categorical distribution $P$, is expressed as $\mathbb{P}(S_u \succ S_v|P, E_{uv}^{(i)})$.

We then utilize a (conditional) maximum likelihood estimation (MLE) inspired approach to derive the categorical distribution $P$ that most accurately reflects these ranking events of all the $M$ responses: 
\begin{equation}  
\label{eq:loss}
 \min _P  \note{-}\sum_{i=1}^N \sum_{S_u \in \mathcal{A}} \sum_{S_v \in \mathcal{A}}\mathbb{I}\left\{S_u \stackrel{\scriptstyle(i)}{\succ} S_v\right\} \cdot
 \log \mathbb{P}\left(S_u \succ S_v \mid P,E_{uv}^{(i)}\right)  \quad
 \text{subject to} \sum_{S_u \in \mathcal{A}} P\left(S_u\right)=1   
\end{equation}
\begin{proposition} 
Suppose the Top-K answers are drawn from a categorical distribution $P$ without replacement. Define the event $(S_u  \succ  S_v)$ to indicate that the realization $S_u$ is observed before $S_v$ in the $i\text{-th}$ draw without replacement. Under this setting, the conditional probability is given by:
\[ \mathbb{P}\left(S_u \succ S_v \mid P, E_{uv}^{(i)}\right) = \frac{P(S_u)}{P(S_u)+P(S_v)} \]
The optimization objective to minimize the expected loss is then:
\begin{equation}
 \min _P  \note{-}\sum_{i=1}^N \sum_{S_u \in \mathcal{A}} \sum_{S_v \in \mathcal{A}} \mathbb{I}\left\{S_u \stackrel{\scriptstyle(i)}{\succ} S_v \right\} \cdot
 \log \frac{P(S_u)}{P(S_u)+P(S_v)}  \quad
 \text{s.t.} \sum_{S_u \in \mathcal{A}} P\left(S_u\right)=1   
\end{equation}
\end{proposition}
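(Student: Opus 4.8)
The plan is to prove the claimed conditional probability formula directly from the definition of sampling without replacement, and then observe that substituting it into the MLE objective \eqref{eq:loss} immediately yields the stated optimization problem, so the only real content is the probability computation.

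\textbf{Step 1: Reduce to a two-element comparison.} Fix a generation index $i$ and two distinct answers $S_u, S_v \in \mathcal{A}$. Condition on the event $E_{uv}^{(i)}$ that at least one of $S_u, S_v$ is drawn among the $K$ draws. The event $(S_u \succ S_v)$ means $S_u$ is drawn and either $S_v$ is not drawn at all, or $S_v$ is drawn later. I would argue that, when drawing without replacement from $P$, the \emph{relative order} in which the first of $\{S_u, S_v\}$ to appear is $S_u$ versus $S_v$ depends only on $P(S_u)$ and $P(S_v)$. The cleanest way is the standard competing-exponentials / sequential-sampling argument: imagine drawing elements one at a time without replacement; the probability that among all elements the first one belonging to $\{S_u, S_v\}$ is $S_u$ equals $P(S_u)/(P(S_u)+P(S_v))$. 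This is because we may ignore all draws of elements outside $\{S_u,S_v\}$ (they are irrelevant to which of the two comes first), and conditioning on the reduced two-element system, the first draw is $S_u$ with probability proportional to its mass.

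\textbf{Step 2: Make the "ignore the others" argument rigorous.} The point needing care is that "first of the two to appear" coincides with the event $(S_u \succ S_v)$ restricted to a window of only $K$ draws, not infinitely many — but conditioning on $E_{uv}^{(i)}$ exactly handles this: given that at least one of them appears within the $K$ draws, the question of which appears first is the same as in the unrestricted sampling, because truncating at $K$ only removes outcomes where neither appears (which we've conditioned away) and never reorders the two. Formally, I would write $\mathbb{P}(S_u \succ S_v \mid E_{uv}^{(i)}) = \sum_{j\ge 1} \mathbb{P}(\text{first of the two appears at draw } j, \text{ and it is } S_u \mid E_{uv}^{(i)})$ and show each conditional "it is $S_u$ given it appears at draw $j$" factor equals $P(S_u)/(P(S_u)+P(S_v))$ by symmetry of the exchangeable draws, so the sum telescopes to that ratio. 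Alternatively, a slick one-liner: by exchangeability of a without-replacement sample, the conditional distribution of the ordered pair (first-appearing, second-appearing) among $\{S_u, S_v\}$ is the size-biased order, giving the ratio directly.

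\textbf{Step 3: Substitute into the objective.} Once $\mathbb{P}(S_u \succ S_v \mid P, E_{uv}^{(i)}) = \frac{P(S_u)}{P(S_u)+P(S_v)}$ is established, plugging it into \eqref{eq:loss} in place of the abstract $\mathbb{P}(S_u \succ S_v \mid P, E_{uv}^{(i)})$ gives exactly the displayed objective with $\log \frac{P(S_u)}{P(S_u)+P(S_v)}$, subject to the same simplex constraint; this step is purely mechanical.

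\textbf{Main obstacle.} The crux is Step 1--2: justifying that the conditional ordering probability reduces to the Bradley--Terry form $P(S_u)/(P(S_u)+P(S_v))$ despite (a) sampling being without replacement and (b) the horizon being truncated at $K$ draws. The temptation is to hand-wave, but the honest argument is the exchangeability/competing-risks reduction — showing that all the other answers and the truncation are genuinely irrelevant once we condition on $E_{uv}^{(i)}$. I expect this is where the author's proof spends its effort; everything downstream is bookkeeping.
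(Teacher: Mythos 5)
Your proposal is correct and takes essentially the same route as the paper: the paper's Appendix~A proof likewise decomposes over the position $j$ of the first appearance of either $S_u$ or $S_v$ (and over the preceding $j-1$ draws), shows the without-replacement renormalization cancels so that each conditional factor equals $P(S_u)/(P(S_u)+P(S_v))$, and then sums via the law of total probability before substituting into \eqref{eq:loss}. One minor caveat: your "slick" aside invoking exchangeability is not literally valid, since weighted without-replacement (successive-sampling) draws are not exchangeable, but your primary argument does not rely on it.
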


To address this constrained optimization problem, we first introduce a change of variables by applying the softmax function to the unbounded domain. This transformation inherently satisfies the simplex constraints, converting our problem into an unconstrained optimization setting. Subsequently, optimization techniques such as gradient descent can be used to obtain the categorical distribution. %

\vspace{-3mm}
\section{Experiment Setup}
\label{sec:exp-setup}

\textbf{Datasets.} We evaluate the quality of confidence estimates across five types of reasoning tasks: 1) \textbf{Commonsense} Reasoning on two benchmarks, Sports Understanding (SportUND)~\citep{SportsUnderstanding} and StrategyQA~\citep{geva2021did} from BigBench~\citep{ghazal2013bigbench}; 2) \textbf{Arithmetic} Reasoning on two math problems, GSM8K~\citep{cobbe2021training} and SVAMP~\citep{patel-etal-2021-nlp}; 3) \textbf{Symbolic} Reasoning on two benchmarks, Date Understanding (DateUnd)~\citep{DataUnderstanding} and Object Counting (ObjectCou)~\citep{wang2019learning} from BigBench; 4) tasks requiring \textbf{Professional Knowledge}, such as Professional Law (Prf-Law) from MMLU~\citep{hendrycks2021measuring}; 5) tasks that require \textbf{Ethical Knowledge}, e.g., Business Ethics (Biz-Ethics) from MMLU~\citep{hendrycks2021measuring}. 

\textbf{Models} We incorporate a range of widely used LLMs of different scales, including Vicuna 13B \citep{vicuna2023}, GPT-3 175B~\citep{brown2020language}, GPT-3.5-turbo \citep{OpenAIChatGPT}, GPT-4 \citep{openai2023gpt4} and LLaMA 2 70B~\citep{llama2}.

\textbf{Evaluation Metrics.} 
To evaluate the quality of confidence outputs, two orthogonal tasks are typically employed: calibration and failure prediction~\citep{naeini2015obtaining,yuan2021large, xiong2022birds}. Calibration evaluates how well a model's expressed confidence aligns with its actual accuracy: ideally, samples with an 80\% confidence should have an accuracy of 80\%. Such well-calibrated scores are crucial for applications including risk assessment. On the other hand, failure prediction gauges the model's capacity to assign higher confidence to correct predictions and lower to incorrect ones, aiming to determine if confidence scores can effectively distinguish between correct and incorrect predictions. 
In our study, we employ Expected Calibration Error (ECE) for calibration evaluation and Area Under the Receiver Operating Characteristic Curve (AUROC) for gauging failure prediction. Given the potential imbalance from varying accuracy levels, we also introduce AUPRC-Positive (PR-P) and AUPRC-Negative (PR-N) metrics to emphasize whether the model can identify incorrect and correct samples, respectively.

Further details on datasets, models, metrics, and implementation can be found in \appendixautorefname{~\ref{sec-append:exp-setup}}.

\begin{figure}[t]
    \centering
    \includegraphics[width=0.95\linewidth]{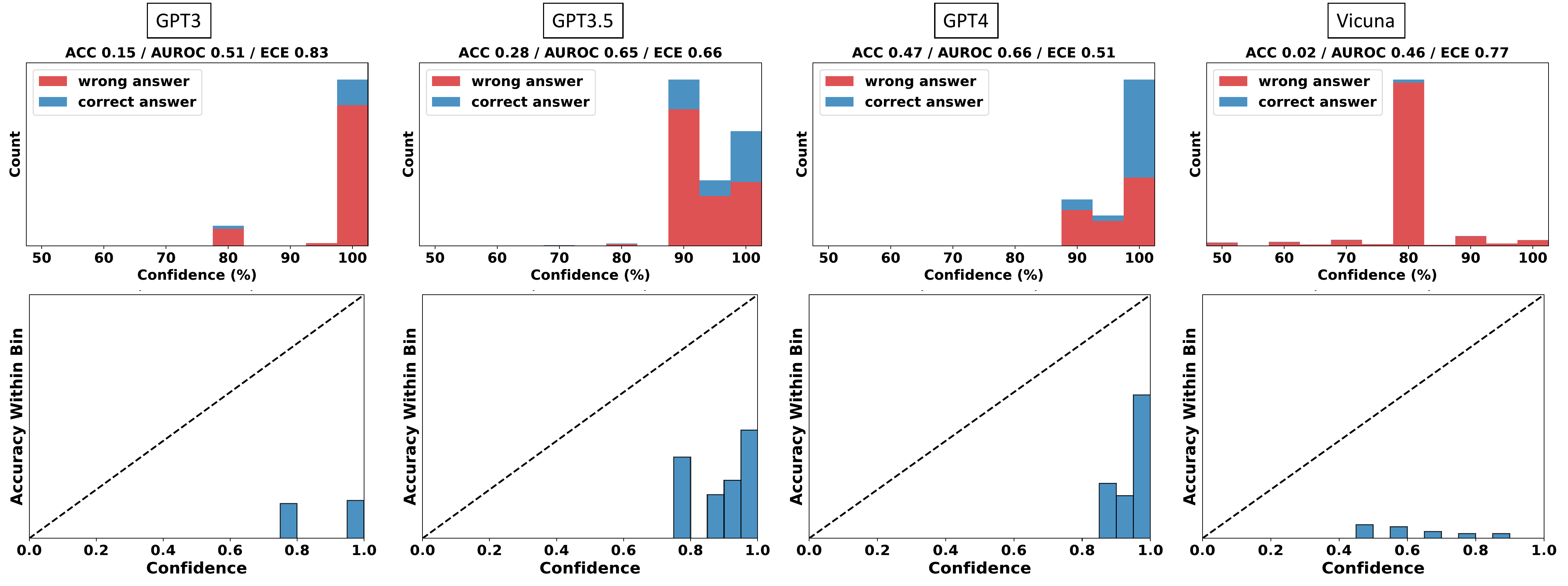}
    \vspace{-2mm}
    \caption{
        Empirical distribution~(\textbf{First row}) and reliability diagram~(\textbf{Second row}) of vanilla verbalized confidence across four models on GSM8K. The prompt used is in \tableautorefname{~\ref{tab:definition_prompts_original}}. From this figure, we can observe that 1) the confidence levels primarily range between 80\% and 100\%, often in multiples of 5; 2) the accuracy within each bin is much lower than its corresponding confidence, indicating significant overconfidence.
    }
    \vspace{-3mm}
    \label{fig:distribution-verbalized-confidence}
\end{figure}

\vspace{-2mm}
\section{Evaluation and Analysis}
\vspace{-1mm}
\label{sec:exp}

To provide insights on the best practice for eliciting confidence, we systematically examine each component (see \figureautorefname{~\ref{fig:framework}}) of the confidence elicitation framework (\textsection{\ref{sec:method}}). We test the performance on eight datasets of five different reasoning types and five commonly used models (see \textsection{\ref{sec:exp-setup}}), and yield the following key findings.

\subsection{LLMs tend to be overconfident when verbalizing their confidence}
\label{sec:exp-overconfident}

\textbf{The distribution of verbalized confidences mimics how humans talk about confidence.} To examine model's capacity to express verbalized confidence, we first visualize the distribution of confidence in \figureautorefname{~\ref{fig:distribution-verbalized-confidence}}. Detailed results on other datasets and models are provided in Appendix Figure~\ref{fig:distribution-map-complete}. Notably, the models tend to have high confidence for all samples, appearing as multiples of 5 and with most values ranging between the 80\% to 100\% range, which is similar to the patterns identified in the training corpus for GPT-like models as discussed by \citet{navigating}. Such behavior suggests that models might be imitating human expressions when verbalizing confidence.

\textbf{Calibration and failure prediction performance improve as model capacity scales.} The comparison of the performance of various models (\tableautorefname{~\ref{tab:vanilla-ece-auroc-main-content}}) reveals a trend: as we move from GPT-3, Vicuna, GPT-3.5 to GPT-4, with the increase of model accuracy, there is also a noticeable decrease in ECE and increase in AUROC, e.g., approximate 22.2\% improvement in AUROC from GPT-3 to GPT-4.

\textbf{Vanilla verbalized confidence exhibits significant overconfidence and poor failure prediction, casting doubts on its reliability.} 
\tableautorefname{~\ref{tab:vanilla-ece-auroc-main-content}} presents the performance of vanilla verbalized confidence across five models and eight tasks. According to the criteria given in \citet{bigbench}, GPT-3, GPT-3.5, and Vicuna exhibit notably high ECE values, e.g., the average ECE exceeding 0.377, suggesting that the verbalized confidence of these LLMs are poorly calibrated. While GPT-4 displays lower ECE, its AUROC and AUPRC-Negative scores remain suboptimal, with an average AUROC of merely 62.7\%—close to the 50\% random guess threshold—highlighting challenges in distinguishing correct from incorrect predictions.

\begin{table*}[t]
    \centering
    \caption{\textbf{Vanilla Verbalized Confidence} of 4 models and 8 datasets (metrics are given by $\times 10^2$). Abbreviations are used: Date (Date Understanding), Count (Object Counting), Sport (Sport Understanding), Law (Professional Law), Ethics (Business Ethics). ECE > 0.25, AUROC, AUPRC-Positive, AUPRC-Negative < 0.6 denote significant deviation from ideal performance. Significant deviations in averages are highlighted in red. The prompt used is in \tableautorefname{~\ref{tab:definition_prompts_original}}. 
    }
    \vspace{-2mm}
     \resizebox{1.0\linewidth}{!}{%
    \begin{tabular}{llcccccccc|c}
    \toprule
     Metric & Model  &  GSM8K &  SVAMP &  Date &  Count &  Strategy &  Sport &  Law &  Ethics &   Avg \\
    \midrule
    \multirow{4}{*}{ECE $\downarrow$} & GPT-3 &   82.7 &   35.0 &     82.1 &       52.0 &        41.8 &      42.0 &     47.8 &        32.3 &  \greyhighlight{52.0} \\
        & Vicuna &   76.0 &   70.7 &     17.0 &       45.3 &        42.5 &      37.5 &     45.2 &        34.6 &  \greyhighlight{46.1} \\
 & LLaMA 2& 71.8 &36.4 &38.5 &58.0 &26.2 &38.8 &42.2 &36.5 & \greyhighlight{43.6}\\
        & GPT-3.5 &   66.0 &   22.4 &     47.0 &       47.1 &        26.0 &      25.1 &     44.3 &        23.4 &  \greyhighlight{37.7} \\
        & GPT-4 &   31.0 &   10.7 &     18.0 &       26.8 &        16.1 &      15.4 &     17.3 &         8.5 &  18.0 \\
    \midrule
    \multirow{4}{*}{ROC $\uparrow$} & GPT3 &   51.2 &   51.7 &     50.2 &       50.0 &        49.3 &      55.3 &     46.5 &        56.1 &  \greyhighlight{51.3} \\
    & Vicuna &   52.1 &   46.3 &     53.7 &       53.1 &        50.9 &      53.6 &     52.6 &        57.5 &  \greyhighlight{52.5} \\
 & LLaMA 2& 58.8 & 52.1 &71.4 &51.3 &56.0 &48.5 &50.5 &62.4 &\greyhighlight{56.4} \\
        & GPT-3.5 &   65.0 &   63.2 &     57.0 &       54.1 &        52.8 &      43.2 &     50.5 &        55.2 &  \greyhighlight{55.1} \\
        & GPT4 &   81.0 &   56.7 &     68.0 &       52.0 &        55.3 &      60.0 &     60.9 &        68.0 &  62.7 \\
    \midrule
    \multirow{4}{*}{PR-N $\uparrow$} & GPT-3 &   85.0 &   37.3 &     82.2 &       52.0 &        42.0 &      46.4 &     51.2 &        41.2 &  \greyhighlight{54.7} \\
    & Vicuna &   96.4 &   87.9 &     34.9 &       65.4 &        53.8 &      51.5 &     75.3 &        70.9 &  67.0 \\
 & LLaMA 2& 92.6 &57.4 & 88.3 &59.6 &38.2 &40.6 &61.0 &58.3 &  \greyhighlight{62.0}\\
        & GPT-3.5 &   79.0 &   33.9 &     64.0 &       51.2 &        35.7 &      30.5 &     54.8 &        35.5 &  \greyhighlight{48.1} \\
        & GPT-4 &   65.0 &   15.8 &     26.0 &       28.9 &        26.6 &      31.5 &     40.0 &        39.5 &  \greyhighlight{34.2} \\
    \midrule
    \multirow{4}{*}{PR-P $\uparrow$} & GPT-3 &   15.5 &   65.5 &     17.9 &       48.0 &        57.6 &      59.0 &     45.4 &        66.1 &  \greyhighlight{46.9} \\
    & Vicuna &    4.10 &   11.0 &     69.1 &       39.1 &        47.5 &      52.0 &     27.2 &        38.8 &  \greyhighlight{36.1} \\
 & LLaMA 2& 11.9 &46.3  &46.6 &41.4 &68.6 &58.3 &39.2 &65.0 & \greyhighlight{47.2}\\
        & GPT-3.5 &   38.0 &   81.3 &     57.0 &       54.4 &        67.2 &      67.5 &     45.8 &        70.5 &  \greyhighlight{60.2} \\
        & GPT-4 &   57.0 &   90.1 &     88.0 &       73.8 &        78.6 &      79.3 &     73.4 &        87.2 &  78.4 \\
    \bottomrule
    \end{tabular}
      }
      \vspace{-5mm}
  \label{tab:vanilla-ece-auroc-main-content}
\end{table*}

\vspace{-2mm}
\subsection{Human-inspired Prompting Strategies Partially Reduce Overconfidence}
\vspace{-2mm}
\label{sec:exp-prompt-strategies}

\textbf{Human-inspired prompting strategies improve model accuracy and calibration, albeit with diminishing returns in advanced models like GPT-4.} As illustrated in Figure~\ref{fig:prompt_strategy_gpt4}, we compare the performance of five prompting strategies across five datasets on GPT-3.5 and GPT-4. Analyzing the average ECE, AUROC, and their respective performances within each dataset, human-inspired strategies offer consistent improvements in accuracy and calibration over the vanilla baseline, with modest advancements in failure prediction.

\textbf{No single prompting strategy consistently outperforms the others.}
Figure~\ref{fig:prompt_strategy_gpt4} suggests that there is no single strategy that can consistently outperform the others across all the datasets and models. By evaluating the average rank and performance enhancement for each method over five task types, we find that \emph{Self-Probing} maintains the most consistent advantage over the baseline on GPT-4, while \emph{Top-K} emerges as the top performer on GPT-3.5.

\begin{figure}[t]
    \centering
    \includegraphics[width=0.99\linewidth]{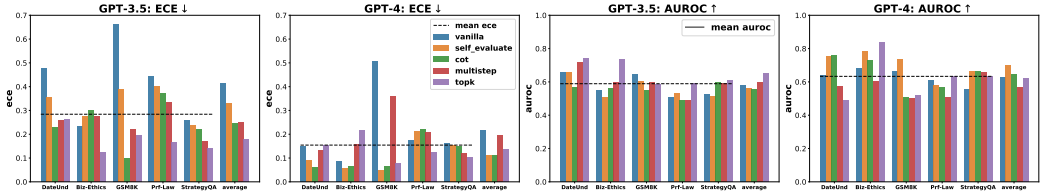}
    \caption{Comparative analysis of 5 prompting strategies over 5 datasets for 2 models (GPT-3.5 and GPT-4). The `average' bar represents the mean ECE for a given prompting strategy across datasets. The `mean ECE' line is the average across all strategies and datasets. AUROC is calculated in a similar manner. The accuracy comparison is shown in \appendixautorefname{~\ref{sec:prompt-details}}.}
    \vspace{-4mm}
    \label{fig:prompt_strategy_gpt4}
\end{figure}

\textbf{While ECE can be effectively reduced using suitable prompting strategies, failure prediction still remains a challenge.} Comparing the average calibration performance across datasets (`mean ece' lines) and the average failure prediction performance (`mean auroc'), we find that while we can reduce ECE with the right prompting strategy, the model's failure prediction capability is still limited, i.e., close to the performance of random guess (AUROC=0.5). A closer look at individual dataset performances reveals that the proposed prompt strategies such as CoT have significantly increased the accuracy (see Table~\ref{tab:cot-verbalized-confidence}), while the confidence output distribution still remains at the range of $80\%-100\%$, suggesting that \emph{a reduction in overconfidence is due to the diminished gap between average confidence and accuracy, not necessarily indicating a substantial increase in the model's ability to judge the correctness of its responses}. For example, with the CoT prompting on the GSM8K dataset, GPT-4 with 93.6\% accuracy achieves a near-optimal ECE 0.064 by assigning 100\% confidence to all samples. However, since all samples receive the same confidence, it is challenging to distinguish between correct and incorrect samples based on the verbalized confidence.

\vspace{-2mm}
\subsection{Variance Among Multiple Responses Improves Failure Prediction}
\vspace{-2mm}
\label{sec:exp-consistency}

\begin{table*}[ht]
    \centering
    \caption{Comparison of sampling strategies with the number of responses $M=5$ on GPT-3.5. The prompt and aggregation strategies are fixed as CoT and Consistency when $M>1$. To compare the effect of $M$, we also provide the baseline with $M=1$ from \figureautorefname{~\ref{fig:prompt_strategy_gpt4}}. Metrics are given by $\times 10^2$. 
}
\vspace{-2mm}
    \resizebox{1.0\linewidth}{!}{
    \begin{tabular}{lcccccccccccc}
        \toprule
         & \multicolumn{2}{c}{GSM8K} & \multicolumn{2}{c}{Prf-Law} & \multicolumn{2}{c}{DateUnd} &\multicolumn{2}{c}{StrategyQA} &\multicolumn{2}{c}{Biz-Ethics} &\multicolumn{2}{c}{Average} \\ 
         Method & ECE & AUROC & ECE & AUROC & ECE & AUROC & ECE & AUROC & ECE & AUROC & ECE & AUROC   \\ \midrule
         Misleading (M=5) & 8.03 &  88.6 &  18.3 &  59.3 &  20.5 &  67.3 & 21.8 & \textbf{61.5} &17.8 &71.3 &\textbf{17.3} & 69.6 \\ 
         Self-Random (M=5) & \textbf{6.28} &  \textbf{92.7} &  26.0 &  \textbf{65.6} &  \textbf{17.0} &  66.8 &23.3 &60.8 &20.7 &79.0 & 18.7 & \textbf{73.0} \\ 
         Prompt (M=5) & 35.2 &  74.4 & 31.5 &  60.8 &  23.9 &  69.8 &16.1 &61.3 &15.0 &\textbf{79.5} &24.3 &69.2  \\ \midrule
         CoT (M=1)  &  10.1 &  54.8 &  39.7 &  52.2 &  23.4 &  57.4 &22.0 & 59.8 & 30.0 & 56.0 & 25.0 & 56.4 \\ 
         Top-K (M=1)  &  19.6 &  58.5 &  \textbf{16.7} & 58.9 &  26.1 & \textbf{74.2}  & \textbf{14.0} & 61.3 & \textbf{12.4} & 73.3 & 17.8 & 65.2 \\ 
         \bottomrule
     \end{tabular}
     }
     \label{tab: emsemble-result-gpt3.5}
 \end{table*}

\vspace{-3mm}
\textbf{Consistency among multiple responses is more effective in improving failure prediction and calibration compared to verbalized confidence ($M=1$), with particularly notable improvements on the arithmetic task.}
Table~\ref{tab: emsemble-result-gpt3.5} demonstrates that the sampling strategy with 5 sampled responses paired with consistency aggregation consistently outperform verbalized confidence in calibration and failure prediction, particularly on arithmetic tasks, e.g., GSM8K showcases a remarkable improvement in AUROC from 54.8\% (akin to random guessing) to 92.7\%, effectively distinguishing between incorrect and correct answers. 
The average performance in the last two columns also indicates improved ECE and AUROC scores, suggesting that obtaining the variance among multiple responses can be a good indicator of uncertainty. 

\textbf{As the number of sampled responses increases, model performance improves significantly and then converges. } \figureautorefname{~\ref{fig:hint_impact}} exhibits the performance of various number of sampled responses $M$ from $M=1$ to $M=13$. The result suggests that the ECE and AUROC could be improved by sampling more responses, but the improvement becomes marginal as the number gets larger. Additionally, as the computational time and resources required for $M$ responses go linearly with the baseline ($M$=1), $M$ thus presents a trade-off between efficiency and effectiveness.  Detailed experiments investigating the impact of the number of responses can be found in Appendix \ref{sec:impact-prompts} and \ref{sec:impact-k}.

\vspace{-1mm}
\subsection{Introducing Verbalized Confidence Into The Aggregation Outperforms Consistency-only Aggregation}
\label{sec:exp-aggregation}
\textbf{Pair-Rank achieves better performance in calibration while Avg-Conf boosts more in failure prediction.}
On the average scale, we find that Pair-Rank emerges as the superior choice for calibration that can reduce ECE to as low as 0.028, while Avg-Conf stands out for its efficacy in failure prediction. 
This observation agrees with the underlying principle that Pair-Rank learns the categorical distribution of potential answers through our $K$ observations, which aligns well with the notion of calibration and is therefore more likely to lead to a lower ECE. In contrast, Avg-Conf leverages the consistency, using verbalized confidence as a weighting factor for each answer. This approach is grounded in the observation that accurate samples often produce consistent outcomes, while incorrect ones yield various responses, leading to a low consistency. This assumption matches well with failure prediction, and is confirmed by the results in Table~\ref{tab:aggregation-gpt4}. In addition, our comparative analysis of various aggregation strategies reveals that introducing verbalized confidence into the aggregation (e.g., Pair-Rank and Avg-Conf) is more effective compared to consistency-only aggregation (e.g., Consistency), especially when LLM queries are costly, and we are limited in sampling frequency (set to $M=5$ queries in our experiment). Verbalized confidence, albeit imprecise, reflects the model's uncertainty tendency and can enhance results when combined with ensemble methods.

\begin{table}[t]
\centering
\caption{Performance comparison of aggregation strategies on GPT-4 using Top-K Prompt and Self-Random sampling. Pair-Rank aggregation achieves the lowest ECE in half of the datasets and maintains the lowest average ECE in calibration; Avg-Conf surpasses other methods in terms of AUROC in five out of the six datasets in failure prediction. Metrics are given by $\times 10^2$.
}
\vspace{-2mm}
\resizebox{1.0\linewidth}{!}{
\begin{tabular}{llccccccc}
\toprule
Metric & Aggregator & \multicolumn{1}{c}{GSM8K} & \multicolumn{1}{c}{Law} & \multicolumn{1}{c}{Date} & \multicolumn{1}{c}{Sport} & \multicolumn{1}{c}{Strategy} & \multicolumn{1}{c}{Ethics} & \multicolumn{1}{c}{Mean \& Var} \\ \midrule
\multirow{3}{*}{ECE $\downarrow$} & Consistency & \textbf{4.80} &  21.1 &  \textbf{6.00} &  13.4 &  13.5 &  13.2 &  12.0 \tiny{$\pm$ 0.3} \\ 
& Avg-Conf& 10.0&  \textbf{14.4} &  7.70 &  10.6 &  5.90 &  20.2 &  14.8 \tiny{$\pm$0.7} \\
& Pair-Rank & 7.40 &  15.3 &  8.50 &  \textbf{2.80} &  \textbf{3.50} &  \textbf{3.80} &  \highlight{6.90 \tiny{$\pm$0.2}} \\ \midrule
\multirow{3}{*}{AUROC $\uparrow$} & Consistency  &  \textbf{84.4} &  66.2 &  68.9 &  60.3 &  65.4 &  56.3 &  66.9 \tiny{$\pm$0.8} \\ 
& Avg-Conf  &  41.0 &  \textbf{68.0} &  \textbf{72.7} &  \textbf{64.8} &  \textbf{70.5} &  \textbf{84.4} &  \highlight{66.9 \tiny{$\pm$1.7}} \\ 
& Pair-Rank  &  80.3 &  66.5 &  67.4 &  61.9 &  62.1 &  67.6 &  \highlight{67.6 \tiny{$\pm$0.4}} \\ \bottomrule
\end{tabular}
}
\vspace{-4mm}
\label{tab:aggregation-gpt4}
\end{table}

\vspace{-3mm}
\section{Discussions}
\label{sec:conclusion}
\vspace{-2mm}
In this study, we focus on confidence elicitation, i.e., empowering Large Language Models (LLMs) to accurately express the confidence in their responses. Recognizing the scarcity of existing literature on this topic, we define a systematic framework with three components: prompting, sampling and aggregation to explore confidence elicitation algorithms and then benchmark these algorithms on two tasks across eight datasets and five models. 
Our findings reveal that LLMs tend to exhibit overconfidence when verbalizing their confidence. 
This overconfidence can be mitigated to some extent by using proposed prompting strategies such as CoT and Self-Probing. 
Furthermore, sampling strategies paired with specific aggregators can improve failure prediction, especially in arithmetic datasets. We hope this work could serve as a foundation for future research in these directions.

\textbf{Comparative analysis of white-box and black-box methods.} 
While our method is centered on black-box settings, comparing it with white-box methods helps us understand the progress in the field. We conducted comparisons on five datasets with three white-box methods (see \textsection{\ref{append_sec:white_box}}) and observed that although white-box methods indeed perform better, the gap is narrow, e.g., 0.522 to 0.605 in AUROC. This finding underscores that the field remains challenging and unresolved. 

\textbf{Are current algorithms satisfactory?} Not quite. Our findings (Table~\ref{tab:aggregation-gpt4}) reveals that while the best-performing algorithms can reduce ECE to a quite low value like 0.028, they still face challenges in predicting incorrect predictions, especially in those tasks requiring professional knowledge, such as professional law. This underscores the need for ongoing research in confidence elicitation.

\textbf{What is the recommendation for practitioners?} Balancing between efficiency, simplicity, and effectiveness, and based on our empirical results, we recommend a stable-performing method for practitioners: \textbf{Top-K prompt + Self-Random sampling + Avg-Conf or Pair-Rank aggregation}. Please refer to \appendixautorefname{~\ref{sec-append:best_practice}} for the reasoning and detailed discussions, including the considerations when using black-box confidence elicitation algorithms and why these methods fail in certain cases.

\textbf{Limitations and Future Work:} \textit{1) Scope of Datasets.} We mainly focuses on fixed-form and free-form question-answering QA tasks where the ground truth answer is unique, while leaving tasks such as summarization and open-ended QA to the future work. \textit{2) Black-box Setting.} Our findings indicate black-box approaches remain suboptimal, while the white-box setting, with its richer information access, may be a more promising avenue. Integrating black-box methods with limited white-box access data, such as model logits provided by GPT-3, could be a promising direction.

\newpage
\section*{Acknowledgments}
This research is supported by the Ministry of Education, Singapore, under the Academic Research Fund Tier 1 (FY2023).

\bibliography{iclr2024_conference}
\bibliographystyle{iclr2024_conference}

\newpage

\appendix

\section{Proof of Proposition 3.1}
\label{app:proposition-proof}

\paragraph{Notation.} 
Given a question with $N$ candidate responses, the $i\text{-th}$ response consists of $K$ sequentially ordered answers, denoted as $\mathcal{S}^{(i)}_K = ( S_1^{(i)}, S_2^{(i)}, \dots, S_K^{(i)} )$. Let $\mathcal{A} = \{ S_1, S_2, \dots, S_M \}$ represent the set of unique answers across all $N$ responses, where $M$ is the total number of distinct answers. 
The event where the model ranks answer $S_u$ above $S_v$ in its $i$-th generation is represented as $(S_u \stackrel{\scriptstyle(i)}{\succ} S_v)$. In contexts where the generation is implicit, this is simply denoted as $(S_u \succ S_v)$. Let $E_{uv}^{(i)}$ be the event where at least one of $S_u$ and $S_v$ appears in the $i$-th generation. The probability of $(S_u \succ S_v)$, given $E_{uv}^{(i)}$ and a categorical distribution $P$, is expressed as $\mathbb{P}(S_u \succ S_v|P,E_{uv}^{(i)})$.
\begin{proposition} 
Suppose the Top-K answers are drawn from a categorical distribution $P$ without replacement. Define the event $(S_u  \succ  S_v)$ to indicate that the realization $S_u$ is observed before $S_v$ in the $i\text{-th}$ draw without replacement. Under this setting, the conditional probability is given by:
\[ \mathbb{P}\left(S_u \succ S_v \mid P,E_{uv}^{(i)}\right) = \frac{P(S_u)}{P(S_u)+P(S_v)} \]
The optimization objective to minimize the expected loss is then:
\begin{equation}
 \min _P  -\sum_{i=1}^N \sum_{S_u \in \mathcal{A}} \sum_{S_v \in \mathcal{A}} \mathbb{I}\left\{S_u \stackrel{\scriptstyle(i)}{\succ} S_v \right\} \cdot
 \log \frac{P(S_u)}{P(S_u)+P(S_v)}  \quad
 \text{s.t.} \sum_{S_u \in \mathcal{A}} P\left(S_u\right)=1   
\end{equation}
\end{proposition}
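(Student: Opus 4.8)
The plan is to establish the conditional probability formula first, and then the optimization objective follows immediately by substitution into \eqref{eq:loss}. The key claim is: if we draw answers sequentially without replacement from a categorical distribution $P$ over $\mathcal{A}$, then conditioned on the event $E_{uv}^{(i)}$ that at least one of $S_u, S_v$ appears among the drawn answers, the probability that $S_u$ is drawn before $S_v$ equals $P(S_u)/(P(S_u)+P(S_v))$.

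First I would reduce the sequential-without-replacement draw to a cleaner description. A standard fact about sampling without replacement from a categorical distribution is that the relative order in which two particular items $S_u$ and $S_v$ first appear depends only on their two probabilities: if we imagine generating the full random permutation of $\mathcal{A}$ induced by repeated weighted sampling without replacement (the Plackett--Luce / Gumbel-top-$k$ construction), then $\mathbb{P}(S_u \text{ appears before } S_v) = P(S_u)/(P(S_u)+P(S_v))$. One slick way to see this: attach to each answer $S_j$ an independent $\mathrm{Exponential}(P(S_j))$ clock (or equivalently a Gumbel perturbation of $\log P(S_j)$), and draw answers in increasing order of their clocks; this reproduces exactly the without-replacement dynamics. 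Then $S_u$ precedes $S_v$ iff its exponential clock rings first, and the probability that an $\mathrm{Exp}(P(S_u))$ variable is smaller than an independent $\mathrm{Exp}(P(S_v))$ variable is the textbook ratio $P(S_u)/(P(S_u)+P(S_v))$, independent of all the other clocks. Next I would argue that conditioning on $E_{uv}^{(i)}$ — at least one of the two actually lands in the top-$K$ prefix — does not change this ratio: by symmetry/exchangeability of the argument above, the relative order of $S_u$ and $S_v$ is independent of the set of clocks of the other items and of where the $K$-th smallest clock overall falls, so conditioning on ``$\min$ of the two clocks is among the $K$ smallest'' leaves the conditional distribution of which of the two is smaller unchanged. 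This gives $\mathbb{P}(S_u \succ S_v \mid P, E_{uv}^{(i)}) = \frac{P(S_u)}{P(S_u)+P(S_v)}$.

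Finally, substituting this closed form into the conditional-MLE objective \eqref{eq:loss} immediately yields the stated optimization problem, since the indicator $\mathbb{I}\{S_u \stackrel{(i)}{\succ} S_v\}$ is only ever $1$ when $E_{uv}^{(i)}$ holds (if $S_u$ is observed before $S_v$, then $S_u$ in particular was observed), so the conditioning event is automatically satisfied on the support of every nonzero summand and can be dropped from the notation. The simplex constraint carries over unchanged.

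The main obstacle I anticipate is making the conditioning-on-$E_{uv}^{(i)}$ step fully rigorous: one must be careful that ``at least one of $S_u, S_v$ appears in the first $K$ draws'' is an event whose occurrence can be decided from the order statistics of the clocks without breaking the independence between ``which of $S_u, S_v$ is first'' and ``the rank of their minimum among all $M$ clocks.'' The clean way is to condition further on the entire configuration of the other $M-2$ clocks and on the value $\min(T_u, T_v)$ of the smaller of the two relevant clocks; given all of that, the event $E_{uv}^{(i)}$ is determined, while the identity of the argmin between $S_u$ and $S_v$ still has conditional probability $P(S_u)/(P(S_u)+P(S_v))$ by the memorylessness/competing-exponentials computation. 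Averaging over the conditioning then preserves the ratio. An alternative, more elementary route that avoids exponential embeddings is a direct induction on the draw index, peeling off the first drawn answer: if it is $S_u$ we are done; if it is $S_v$ the event fails; if it is some third answer $S_w$, we recurse on the renormalized distribution $P$ restricted to $\mathcal{A}\setminus\{S_w\}$, and a short algebra check shows the ratio $P(S_u)/(P(S_u)+P(S_v))$ is invariant under deleting any $S_w \neq S_u, S_v$ and renormalizing — this is the version I would actually write out if the Poissonization argument feels like overkill for the paper's audience.
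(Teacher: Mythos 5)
Your proposal is correct, but your primary argument is a genuinely different route from the paper's. The paper proves the key identity $\mathbb{P}(S_u \succ S_v \mid P, E_{uv}^{(i)}) = P(S_u)/(P(S_u)+P(S_v))$ by an explicit double conditioning: it introduces the event $F_j$ that position $j$ is the earliest slot where either $S_u$ or $S_v$ appears, further conditions on the entire prefix $\mathcal{S}_{j-1}$ of answers drawn before position $j$, computes the two renormalized without-replacement probabilities $P(S_u)/(1-\sum_{S_m\in\mathcal{S}_{j-1}}P(S_m))$ and likewise for $S_v$, observes that the renormalization cancels so the ratio is independent of the prefix and of $j$, and then applies the law of total probability twice to remove the conditioning. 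Your exponential-clock (Plackett--Luce / Gumbel-top-$K$) embedding reaches the same identity more compactly: which of $T_u, T_v$ rings first is independent of $\min(T_u,T_v)$ and of all other clocks, and since $E_{uv}^{(i)}$ is measurable with respect to exactly those quantities, the conditioning is handled transparently --- arguably more explicitly than in the paper, where the role of $E_{uv}^{(i)}$ enters only implicitly through $\sum_j \mathbb{P}(F_j\mid P, E_{uv}^{(i)})=1$. The trade-off is that your route imports the standard fact that ranking independent exponential clocks reproduces categorical sampling without replacement, which would need to be stated or cited, whereas the paper's argument is self-contained and elementary. Your fallback induction (peel off the first draw and note the ratio is invariant under deleting a third answer and renormalizing) is essentially a recursive restatement of the paper's prefix computation, though as sketched it addresses the conditioning on $E_{uv}^{(i)}$ less carefully than your clock argument does. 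The final substitution into the MLE objective, justified by the observation that the indicator forces $E_{uv}^{(i)}$ to hold on every nonzero summand, matches the paper.
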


\begin{proof}
Let us begin by examining the position \(j\) in the response sequence \(\mathcal{S}^{(i)}_K\) where either \(S_u\) or \(S_v\) is first sampled, and the other has not yet been sampled. We denote this event as \(F_j^{(i)}(S_u, S_v)\), and for simplicity, we refer to it as \(F_j\):
\begin{equation}
\begin{aligned}
F_j = F_j^{(i)}(S_u, S_v) & = \left\{ \text{the earliest position in } \mathcal{S}^{(i)}_K \text{ where either } S_u \text{ or } S_v \text{ appears is $j$} \right\} \\
& = \left\{ \forall m,n \in \{1, 2, ..., N\} \mid  S_m^{(i)} = S_u, S_n^{(i)} = S_v, j = \min(m, n) \right\}
\end{aligned}
\end{equation}

Given this event, the probability that \(S_u\) is sampled before \(S_v\) across all possible positions \(j\) is:
\begin{equation}
\mathbb{P}(S_u \succ S_v \mid P, E_{uv}^{(i)}) = \sum_{j=1}^N \mathbb{P}(F_j \mid P, E_{uv}^{(i)}) \times \underbrace{\mathbb{P}(S_u \succ S_v \mid P, E_{uv}^{(i)}, F_j)}_{\text{(a)}}
\end{equation}

To further elucidate (1), which is conditioned on \(F_j\), we note that the first sampled answer between \(S_u\) and \(S_v\) appears at position \(j\). We then consider all potential answers sampled prior to \(j\). For this, we introduce a permutation set \(\mathcal{H}_{j-1}\) to encapsulate all feasible combinations of answers for the initial \(j-1\) samplings. A representative sampling sequence is given by:
\(\mathcal{S}_{j-1} = \{ S_{(1)} \succ S_{(2)} \succ \dots \succ S_{(j-1)} \mid \forall \, l \in \{1, 2, ..., j-1\}, S_{(l)} \in \mathcal{A} \setminus \{S_u, S_v\} \}\).

Consequently, (a) can be articulated as:
\begin{equation}
\mathbb{P}(S_u \succ S_v \mid P, E_{uv}^{(i)}, F_j) = \sum_{ \mathcal{S}_{j-1} \in \mathcal{H}_{j-1}} \mathbb{P}(\mathcal{S}_{j-1} \mid P, E_{uv}^{(i)}, F_j) \times \underbrace{\mathbb{P}(S_u \succ S_v \mid P, E_{uv}^{(i)}, \mathcal{S}_{j-1}, F_j)}_{\text{(b)}}
\end{equation}

Consider the term (b), which signifies the probability that, given the first \(j-1\) samplings and the restriction that the \(j\)-th sampling can only be \(S_u\) or \(S_v\), \(S_u\) is sampled prior to \(S_v\). This probability is articulated as:
\begin{equation}
\begin{aligned}
\mathbb{P}(S_u \succ S_v \mid P, E_{uv}^{(i)}, F_j,\mathcal{S}_{j-1}) &= \frac{\mathbb{P}(S_j^{(i)}=S_u \mid P, E_{uv}^{(i)}, F_j,\mathcal{S}_{j-1})}{\mathbb{P}(S_j^{(i)}=S_u \mid P, E_{uv}^{(i)}, F_j,\mathcal{S}_{j-1})+\mathbb{P}(S_j^{(i)}=S_v \mid P, E_{uv}^{(i)}, F_j,\mathcal{S}_{j-1})} \\
&=\frac{\frac{P(S_u)}{1-\sum_{S_m \in \mathcal{S}_{j-1}} P(S_m)}}{\frac{P(S_v)}{1-\sum_{ S_m \in \mathcal{S}_{j-1}} P(S_m)}+\frac{P(S_u)}{1-\sum_{S_m \in \mathcal{S}_{j-1}} P(S_m)}} \\
&=\frac{P(S_u)}{P(S_u)+P(S_v)}
\end{aligned}
\end{equation}

Integrating equation (9) into equation (8), we obtain:
\begin{equation}
\begin{aligned}
\mathbb{P}(S_u \succ S_v \mid P, E_{uv}^{(i)}, F_j) &= \sum_{ \mathcal{S}_{j-1} \in \mathcal{H}_{j-1}} \mathbb{P}(\mathcal{S}_{j-1} \mid P, F_j) \times \frac{P(S_u)}{P(S_u)+P(S_v)} \\
&= \frac{P(S_u)}{P(S_u)+P(S_v)} \times \sum_{ \mathcal{S}_{j-1} \in \mathcal{H}_{j-1}} \mathbb{P}(\mathcal{S}_{j-1} \mid P, E_{uv}^{(i)}, F_j) \\
& \stackrel{(c)}{=}  \frac{P(S_u)}{P(S_u)+P(S_v)}
\end{aligned}
\end{equation}

Subsequently, incorporating equation (10) into equation (7), we deduce:
\begin{equation}
\begin{aligned}
\mathbb{P}(S_u \succ S_v \mid P, E_{uv}^{(i)}) &= \sum_{j=1}^K \mathbb{P}(F_j \mid P, E_{uv}^{(i)}) \times \frac{P(S_u)}{P(S_u)+P(S_v)} \\
&= \frac{P(S_u)}{P(S_u)+P(S_v)} \times \sum_{j=1}^K \mathbb{P}(F_j \mid P, E_{uv}^{(i)}) \\
& \stackrel{(d)}{=}  \frac{P(S_u)}{P(S_u)+P(S_v)}
\end{aligned}
\label{eq:final-proposition}
\end{equation}
The derivations in (c) and (d) employ the Law of Total Probability.

Incorporating \equationautorefname{~\ref{eq:final-proposition}} into \equationautorefname{~\ref{eq:loss}}, the minimization objective is formulated as:
\begin{equation}
 \min _P - \sum_{i=1}^N \sum_{S_u \in \mathcal{A}} \sum_{S_v \in \mathcal{A}} \mathbb{I}\{S_u \stackrel{\scriptstyle(i)}{\succ} S_v \} \times \log \frac{P(S_u)}{P(S_u)+P(S_v)}  \quad \text{s.t.} \sum_{S_u \in \mathcal{A}} P(S_u)=1   
\end{equation}

\end{proof}

\newpage
\section{Detailed Experiment Results}

\subsection{White-box methods outperform black-box methods, but the gap is narrow.}
\label{append_sec:white_box}

\note{\textbf{Comparative Analysis of White-Box and Black-Box Methods}: Which performs better - white-box or black-box methods? Do white-box methods, with their access to more internal information, outperform their black-box counterparts? If so, how large is the performance gap? To address these questions, we conduct a comparative analysis of white-box methods based on token probability against black-box models utilizing verbalized confidence. }

\note{\textbf{Implementation details}: We utilize the probabilities of each output token to develop three token-probability-based white-box methods: 1) \textbf{Sequence Probability (seq-prob)}, which aggregates the probabilities of all tokens; 2) \textbf{Length-Normalized Sequence Probability (len-norm-prob)}, which normalizes the sequence probability based on sequence length, i.e., $\text{seq-prob}^{\text{1/length}}$; 3) \textbf{Key Token Probability (token-prob)}, designed to focus on the result-specific tokens, e.g., "35" in the output sequence "Explanation: ....; Answer: 35; ...", thereby minimizing the influence of irrelevant output tokens. For our implementation, we use the Chain-of-Thought and Top-K Verbalized Confidence prompt to acquire verbalized confidence and select GPT3 as the backbone model. }

\note{\textbf{Findings:} Our comparative analysis, detailed in Table~\ref{tab:white-box-comparison_topk} and Table~\ref{tab:white-box-comparison_cot}, yields several key insights: 1) Generally, \textbf{white-box methods exhibit better performance}, with length-normalized sequence probability and key token probability emerging as the most effective methods across five datasets and four evaluation metrics. 2) \textbf{The gap between white-box and black-box methods is relatively modest}. Moreover, even the best-performing \textbf{white-box methods fall short of achieving satisfactory results}. This is particularly apparent in the AUROC metric, where the performance of nearly all methods across various datasets ranges between 0.5-0.6, signifying a limited capability in distinguishing between correct and incorrect responses. 3) These experimental results suggest that \textbf{uncertainty estimation in LLMs remains a challenging and unresolved issue}. As mentioned in our introduction, the logit-based methods, which predominantly capture the model's uncertainty regarding the next token, are less effective in capturing the semantic uncertainty inherent in their textual meanings. Although several alternative approaches like semantic uncertainty~\citep{semantic-entropy} have been proposed, they come with significant computational demands. This scenario underscores the need for future research on both white-box and black-box methods to discover more efficient and effective methods for uncertainty estimation in LLMs.}

\begin{table}[h]
\centering
\caption{\note{Performance comparison (metrics are given by $\times 10^2$) of token-probability-based white-box methods including the baseline sequence probability ("seq-prob"), length-normalized sequence probability ("len-norm-prob") and key token probability ("token-prob"), and black-box verbalized confidence ("Verbalized") on GPT-3 using Top-K Prompt. } 
}
\begin{tabular}{lllllll}
\toprule
Dataset &   Acc &        Method &   ECE &  AUROC &  AUPRC-P &  AUPRC-N \\

\midrule
StrategyQA & 59.90 &     Verbalized & 39.04	&  50.34  &   60.06  &	40.27 \\\cmidrule{3-7} 
        &       &         seq-prob & \textbf{7.14}	&  55.50  &   62.99  &	45.22 \\
        &       & len-norm-prob    & 37.65	&  55.50  &   62.99  &	45.22 \\
       &       &        token-prob & 32.43	&  \textbf{60.61}  &   \textbf{69.90}  &	\textbf{47.10}  \\
    \midrule
Biz-Ethics & 61.00 &   Verbalized & \textbf{18.20}	&  66.27  &   71.95  &	50.59 \\\cmidrule{3-7}
     &       &         seq-prob   & 48.49	&  62.30  &   71.07  &	52.23 \\
     &       & len-norm-prob      & 33.70	&  62.30  &   71.07  &	52.23 \\
     &       &        token-prob  & 27.65	&  \textbf{67.00}  &   \textbf{74.89  }&	\textbf{55.01 }\\ 
                           \midrule
GSM8K & 11.52 &       Verbalized & 77.40	&  54.05  &   12.70  &	89.01 \\ \cmidrule{3-7}
      &       &         seq-prob & \textbf{7.73}	  &  69.80  &   20.40  &	94.71 \\
     &       & len-norm-prob     & 72.41	   &  \textbf{70.61}  &   \textbf{21.23}  &	\textbf{94.75} \\
    &       &        token-prob  & 35.60	&  69.29  &   20.63  &	94.27\\ \midrule

DateUND & 15.72 &        Verbalized & 83.47	&  50.80  &   15.93  &	84.54 \\ \cmidrule{3-7}
            &       &      seq-prob & \textbf{16.10}	&  \textbf{62.93}  &   \textbf{22.39}  &	\textbf{90.61} \\
            &       & len-norm-prob & 81.27	&  \textbf{62.93}  &   \textbf{22.39 } &	\textbf{90.61} \\
        &       &        token-prob & 74.19	&  54.25  &   19.28  &	83.85 \\ \midrule
Prf-Law & 44.92 &    Verbalized  & 41.55	&  49.54  &   44.43  &	55.78 \\ \cmidrule{3-7}
        &       &       seq-prob & \textbf{32.31}	&  51.07  &   45.75  &	56.70 \\
        &       & len-norm-prob  & 49.66	&  51.06  &   45.75  &	56.79 \\
        &       &   token-prob   & 43.26	&  \textbf{61.24}  &   \textbf{53.84 } &	\textbf{64.69} \\
\bottomrule
\end{tabular}
\label{tab:white-box-comparison_topk}
\end{table}

\begin{table}[t]
\centering
\caption{\note{Performance comparison (metrics are given by $\times 10^2$) of token-probability-based white-box methods including the baseline sequence probability ("seq-prob"), length-normalized sequence probability ("len-norm-prob") and key token probability ("token-prob"), and black-box verbalized confidence ("Verbalized") on GPT-3 using CoT Prompt. } 
}
\begin{tabular}{lllllll}
\toprule
Dataset &   Acc &        Method &   ECE &  AUROC &  AUPRC-P &  AUPRC-N \\
\midrule
DateUND & 62.33 &                 Verbalized & 37.40 &  50.36 &    62.50 &    38.12 \\
                                \cmidrule{3-7}
                           &       &         seq-prob & 62.30 &  56.37 &    65.14 &    43.21 \\
                           &       & len-norm-prob & \textbf{15.78} &  \textbf{58.70} &    \textbf{66.57} &    \textbf{47.24} \\
                           &       &        token-prob & 27.32 &  40.27 &    55.20 &    35.69 \\ 
\midrule
StrategyQA & 67.57 &                 Verbalized & 29.74 &  51.37 &    68.16 &    34.54 \\
\cmidrule{3-7} 
                           &       &         seq-prob & 67.56 &  52.04 &    69.58 &    33.48 \\
                           &       & len-norm-prob & \textbf{ 6.79} &  52.11 &    \textbf{70.41} &    33.43 \\
                           &       &        token-prob & 30.59 & \textbf{ 53.00} &    68.80 &    \textbf{36.89} \\
                           \midrule
Biz-Ethics & 59.00 &                 Verbalized & 40.90 &  49.15 &    58.59 &    41.00 \\
\cmidrule{3-7}
                           &       &         seq-prob & \textbf{26.50} &  58.99 &    64.30 &    47.45 \\
                           &       & len-norm-prob & 39.43 &  58.99 &    64.30 &    47.45 \\
                           &       &        token-prob & 36.31 &  \textbf{67.38} &    \textbf{75.33} &   \textbf{ 54.89} \\ 
                           \midrule
                     GSM8K & 52.31 &                 Verbalized & 47.49 &  50.32 &    52.47 &    48.02 \\
                     \cmidrule{3-7}
                           &       &         seq-prob & 52.30 &  57.47 &    56.75 &    54.39 \\
                           &       & len-norm-prob & \textbf{29.80} &  57.92 &    \textbf{58.84} &    55.23 \\
                           &       &        token-prob & 44.94 &  \textbf{58.44} &    57.54 &    \textbf{60.43 }\\ \midrule
          Prf-Law & 44.85 &                 Verbalized & 53.43 &  50.13 &    44.90 &    55.91 \\ \cmidrule{3-7}
                           &       &         seq-prob & 44.85 &  51.88 &    46.62 &    56.09 \\
                           &       & len-norm-prob & \textbf{31.00} &  50.10 &    45.34 &    55.32 \\
                           &       &        token-prob & 51.75 &  \textbf{57.83} &   \textbf{ 50.53} &    \textbf{62.52} \\
\bottomrule
\end{tabular}
\label{tab:white-box-comparison_cot}
\end{table}

\subsection{\note{How much does the role-play prompt affect the performance?}}

\note{To explore how the verbalized confidence elicitation performance varies when LLMs are asked to play different personalities such as \textit{"confident"} and \textit{"cautious"}, we conduct the experiment in Figure~\ref{fig:role_play_prompt} and in Table~\ref{tab:role_play_prompt_exp}. The results are derived when adding "You are a confident GPT" (\textbf{Left}) and "You are a cautious GPT" (\textbf{Right}) to the beginning of the Chain of Thought (CoT) prompt (Table~\ref{tab:definition_prompts_cot}). The experimental results show that the difference between their confidence distribution seems minimal, suggesting that assuming different personalities does not significantly affect performance metrics such as accuracy, ECE, and AUROC.}

\begin{figure}[h]
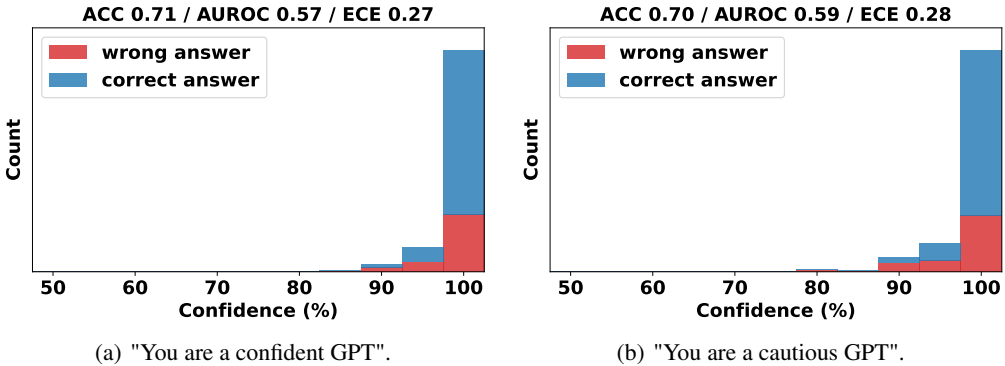

    \centering    
    \subfigure["You are a confident GPT".]{ \includegraphics[width=0.48\linewidth]{fig/Confident_GSM8K_chatgpt-0613_11-14-19-09_processed_auroc_origin.pdf}}
    \subfigure["You are a cautious GPT".]{ \includegraphics[width=0.48\linewidth]{fig/Cautious_GSM8K_chatgpt-0613_11-14-19-11_processed_auroc_origin.pdf}}
    \caption{Distribution of the verbalized confidence with different specified role descriptions in prompts. The results are derived when adding "You are a confident GPT" (\textbf{Left}) and "You are a cautious GPT" (\textbf{Right}) to the beginning of the Chain of Thought (CoT) prompt (Table~\ref{tab:definition_prompts_cot}). All other aspects of the prompts remain identical to the standard CoT format.}
    \label{fig:role_play_prompt} 
\end{figure}

\begin{table}[h]
\centering
\begin{tabular}{llllllll}
\toprule
\textbf{Role} &  \textbf{Model} & \textbf{ACC} & \textbf{ECE} & \textbf{AUROC} & \textbf{AUPRC-P} & \textbf{AUPRC-N} \\ 
\midrule
Confident & chatgpt-0613 & 0.7103 & 0.2741 & 0.5679 & 0.7398 & 0.3635 \\
Cautious & chatgpt-0613 & 0.6983 & 0.2812 & 0.5946 & 0.7415 & 0.4009 \\ 
\bottomrule
\end{tabular}
\caption{Performance Comparison of Verbalized Confidence Elicitation with two types of prompt: "You are a confident GPT" and "You are a cautious GPT". The difference between these two prompts seems minimal, suggesting that asking LLMs to take on different personae does not significantly affect the performance. }
\label{tab:role_play_prompt_exp}
\end{table}

\subsection{How is the distribution of Vanilla Verbalized Confidence Across Models and Datasets?}

\begin{figure}[ht]
    \centering
    \includegraphics[width=0.95\linewidth]{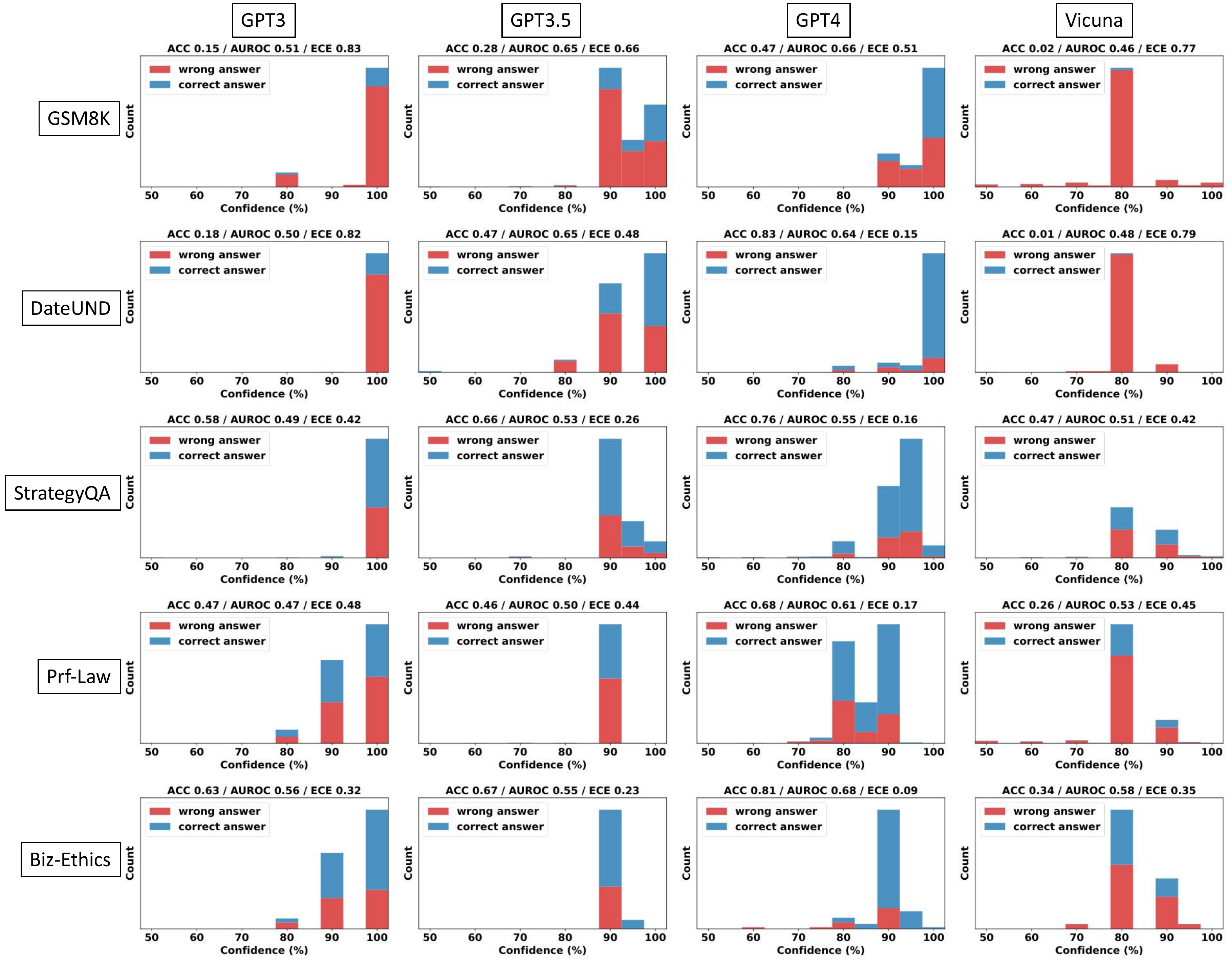}
    \caption{Empirical distribution of vanilla verbalized confidence across 4 models and 5 datasets. The prompt used is in \tableautorefname{~\ref{tab:definition_prompts_original}}. From this figure, we can observe that 1) the confidence levels primarily range between 80\% and 100\%, often in multiples of 5; 2) a large portion of incorrect predictions (red) has been observed even in the 100\% confidence bar, indicating significant overconfidence. }
    \label{fig:distribution-map-complete}
\end{figure}

Figure~\ref{fig:distribution-map-complete} presents the empirical distribution of vanilla verbalized confidence across 4 models and 5 datasets. Notably, all the models output confidence as the multiples of 5, with most values ranging between the 80\% to 100\% range. This behavior resembles the patterns identified in the training corpus for GPT-like models as discussed by \citet{navigating}. Such behavior suggests that models might be imitating human expressions when verbalizing confidence.

\subsection{Detailed Performance of Different Prompting Strategies}
\label{sec:prompt-details}

\textbf{Multi-step and Top-K prompting strategies demonstrate promising results in reducing ECE and improving AUROC, with Top-K being relatively more effective.}
Figure~\ref{fig:Top-K-verbalized-confidence-box} presents a comparison of various prompting strategies (CoT, Multi-Step, Top-K) against vanilla verbalized confidence. The detailed performance of CoT, Multi-Step, and Top-K prompt can be found in \tableautorefname{~\ref{tab:cot-verbalized-confidence}}, Table~\ref{tab:multi-step-verbalized} and Table~\ref{tab:top-k-verbalized}, respectively. 
Judging from the 'average' bar, which computes the mean value across five datasets, both Multi-step and Top-K prompting strategies effectively reduce ECE and enhance AUROC. Moreover, Top-K shows relatively better performance improvements.
The intuition behind this improvement is that this prompting strategy, requesting the model to generate multiple guesses along with their corresponding confidences, naturally nudges the model to be aware of the existence of various possible answers, preventing overconfidence in a single response and promoting re-evaluation of given answers. 

\begin{figure}[h]
    \centering  
    \includegraphics[width=0.95\linewidth]{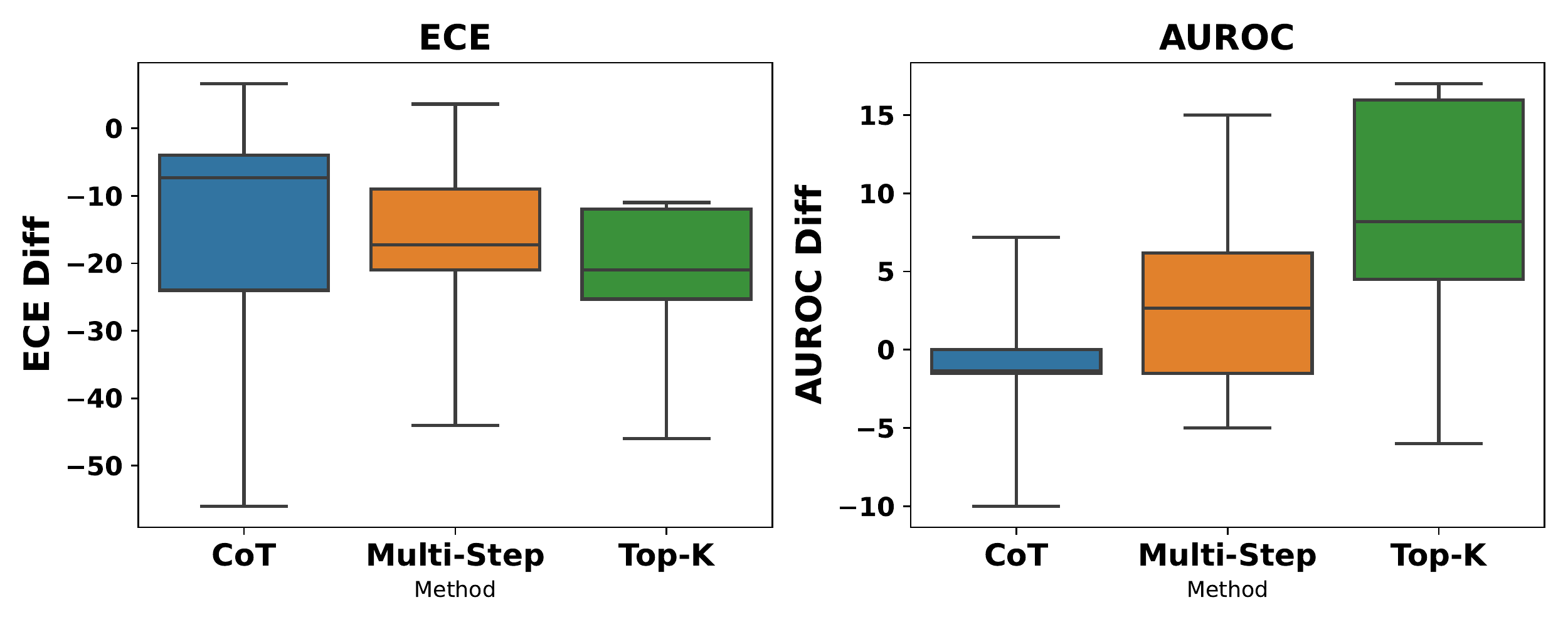}
    \caption{Performance Comparison of four verbalized confidence methods: vanilla, CoT, Multi-Step, Top-K in terms of ECE and AUROC for five types of datasets on GPT-3.5. Refer to Table~\ref{tab:top-k-verbalized} for detailed results. }
    \label{fig:Top-K-verbalized-confidence-box}
\end{figure}

\begin{table}[h]
\centering
\captionsetup{justification=centering, width=0.7\linewidth}
\caption{Improvement of verbalized confidence with Chain-of-Thought Prompts
}
 \resizebox{0.5\linewidth}{!}{
\begin{tabular}{*{5}{c}} 
\toprule
\multirow{2}*{Dataset} & \multirow{2}*{CoT} & \multicolumn{3}{c}{GPT3.5} \\
\cmidrule(lr){3-5} 
   && ACC(\%) & ECE & AUROC  \\
  \midrule
  \multirow{2}*{GSM8K} &  \ding{55}  &  28 & 66 & 65  \\
  & \ding{51} &  80.3 & 10 & 55\\
  \midrule
  \multirow{2}*{DateUnd} & \ding{55}  & 47 & 48 & 65  \\
  & \ding{51} & 73.2 & 23 & 57 \\
  \midrule
  \multirow{2}*{StrategyQA} & \ding{55}  & 65.8 & 26 & 53  \\
  & \ding{51} & 67.9 & 22 & 60 \\
  \midrule
  \multirow{2}*{Prf-Law} & \ding{55}  & 45.5 & 44 & 50  \\
  & \ding{51} &  51.7 & 37 & 49 \\
  \midrule
  \multirow{2}*{Biz-Ethics} & \ding{55}  & 67 & 23 & 55  \\
  & \ding{51} & 61 & 30 & 56  \\
  \bottomrule
  \end{tabular}
  }
  \label{tab:cot-verbalized-confidence}
\end{table}

\begin{table}[h]
\centering
\captionsetup{justification=centering, width=0.7\linewidth}
\caption{Evaluation of multistep verbalized confidence for GPT-3.5 Models}
 \resizebox{0.5\linewidth}{!}{
\begin{tabular}{*{5}{c}} 
\toprule
\multirow{2}*{Dataset} & \multirow{2}*{SA} & \multicolumn{3}{c}{GPT3.5} \\
\cmidrule(lr){3-5} 
   && ACC(\%) & ECE & AUROC  \\
  \midrule
  \multirow{2}*{GSM8K} &  \ding{55}  &  80.3 & 10 & 55\\
  & \ding{51} & 76.2 & 22 & 60 \\
  \midrule
  \multirow{2}*{DateUnd} & \ding{55}  & 73.2 & 23 & 57 \\
  & \ding{51} &63.6 &26 &72  \\
  \midrule
  \multirow{2}*{StrategyQA} & \ding{55}  & 67.9 & 22 & 60 \\
  & \ding{51} &68.7 &17 &59 \\
  \midrule
  \multirow{2}*{Prf-Law} & \ding{55}  &  51.7 & 37 & 49 \\
  & \ding{51} & 49.6 & 27 & 49 \\
  \midrule
  \multirow{2}*{Biz-Ethics} & \ding{55}  & 61 & 30 & 56  \\
  & \ding{51} &61.6 & 27 & 60 \\
  \bottomrule
  \end{tabular}
  }
  \label{tab:multi-step-verbalized}
\end{table}

\subsection{Top-K Verbalized Confidence Performance}

The detailed experiments performance of Top-K verbalized confidence can be found in Table~\ref{tab:top-k-verbalized}.

\begin{table}[h]
\centering
\captionsetup{justification=centering, width=0.7\linewidth}
\caption{Evaluation of Top-K verbalized confidence on GPT-3.5.}
 \resizebox{0.5\linewidth}{!}{
\begin{tabular}{*{4}{c}} 
\toprule
\multirow{2}*{Dataset} & \multicolumn{3}{c}{GPT3.5} \\
\cmidrule(lr){2-4} 
   & ACC(\%) & ECE & AUROC  \\
  \midrule
  GSM8K  &  22.8 & 19.6 & 58.5\\
  \midrule
  DateUnd  & 33.3 & 26.1 & 74.2 \\
  \midrule
  StrategyQA  & 61.3 & 14 & 61.3 \\
  \midrule
  Prf-Law &  42.2 & 16.7 & 58.9 \\
  \midrule
  Biz-Ethics  & 67.0 & 12.4 & 73.3  \\
  \bottomrule
  \end{tabular}
  }
  \label{tab:top-k-verbalized}
\end{table}

\subsection{Impact of Misleading Prompts in Misleading Sampling Strategy}
\label{sec:impact-prompts}
To study the effect of different misleading prompts used in misleading sampling strategy, we categorize the prompts into three types: Weak Claim, Strong Claim, and External Source. The specific prompts are shown in \tableautorefname{~\ref{tab:hints}}. The Weak Claim category represents prompts that exhibit uncertainty, such as "I vaguely remember the answer is" or "I think the answer should be". These prompts suggest that the user is providing misleading information to the Large Language Model (LLM) but lacks confidence in the information provided. The Strong Claim category includes prompts like "I am pretty sure that this is" or "I am very confident that", which express a high degree of certainty. The External Source category represents prompts that cite external sources as their evidence, such as "Wikipedia says" or "the latest research shows that".

Our experimental results (\tableautorefname{~\ref{tab:hints}}) indicate that the Weak Claim category performs better. A possible explanation is that on one hand even providing weak misleading information, the model will analyze and reassess their answers. On the other hand, since the misleading answers are generated randomly, confidently providing this information can sometimes lead to negative effects. For example, the model provides a correct answer with moderate confidence. However, if a misleading hint is provided with high confidence or is supported by an external source, the model may be inclined to believe the prompt and alter its predictions.

\begin{table}[ht]
    \centering
    \captionsetup{justification=centering, width=0.9\linewidth}
    \caption{Different Prompts used for misleading sampling strategy.}
    \begin{tabular}{l|l m{0.8\textwidth}}
    \hline
    \rowcolor{gray!25} 
    Group & Consistency Confidence Misleading Prompts \\\midrule
    Weak Claim & Hint: I think the answer should be \\ \midrule
    Weak Claim & Hint: I read online the answer is  \\ \midrule
    Weak Claim & Hint: I vaguely remember the answer is \\ \midrule
    Externel Source & Hint: My teacher tells me the answer is \\ \midrule
    Externel Source & Hint: The textbook shows that answer is \\ \midrule
    Externel Source & Hint: An expert says that the answer is \\ \midrule
    Externel Source & Hint: Latest research shows that the answer is \\ \midrule
    Externel Source & Hint: Wikipedia claims that the answer is  \\ \midrule
    Strong Claim & Hint: I am 80\% sure the answer is \\ \midrule
    Strong Claim & Hint: I am pretty sure the answer is \\ \midrule
    Strong Claim & Hint: Undoubtedly the answer is \\ \midrule
    Strong Claim & Hint: With my 100\% confidence the answer is \\ \midrule
    \end{tabular}
    \label{tab:hints}
\end{table}

\begin{table}[h]
    \centering
    \caption{The performance of varying prompt groups in StrategyQA on GPT-3.5. The group exhibiting the optimal performance is emphasized in bold. The experimental results indicate that the Weak Claim category performs better. }
    \begin{tabular}{cccc} 
    \toprule
    \multirow{2}*{Method} & \multirow{2}*{Hint Group} & \multicolumn{2}{c}{GPT-3.5} \\
    \cmidrule(lr){3-4}  
    &&  ECE & AUROC  \\
    \midrule
    \misleading              & Weak Claim       & \textbf{19.7}                 & \textbf{62.0}                    \\ 
                            & Strong Claim     & 19.5                  & 61.4                    \\
                            & External Source & 18.2                  & 60.8                    \\ \midrule
    \combinemethod   & Weak Claim        & \textbf{19.8}                  & \textbf{65.4}                    \\
                            & Strong Claim     & 19.5                  & 64.6                    \\
                            & External Source & 18.2                  & 63.4                    \\
    \bottomrule
    \end{tabular}
\end{table}

\subsection{Impact of the Number of Candidate Answers}
\label{sec:impact-k}
We investigate the impact of the number of candidate answers, denoted as \(K\), utilized in the sampling strategy. Specifically, \(K\) represents the number of queries used to construct the set of candidate answers for consistency calculation. We illustrate its calibration performance (ECE) and failure prediction performance (AUROC) in relation to varying numbers of \(K\) (ranging from \(K=1\) to \(K=13\)) in Figure \ref{fig:hint_impact}. 

The results indicate that, in terms of AUROC, a higher candidate set size $K$ contributes to superior performance and reduced variance. However, the optimal candidate size $K$ for ECE varies across different datasets. For instance, the StrategyQA dataset exhibits improved performance with a larger $K$, whereas the Business Ethics dataset generally performs better with a moderate number of candidate answers (e.g., $K=4$). This observation can be attributed to the limited variability of misleading information (restricted to 4 types) used in our experiments for the Business Ethics dataset, implying that the introduction of a large number of more queries does not significantly enhance the information pool. Therefore, to strike a balance between computational efficiency and performance, we set the candidate set to be 4 in our study.

\begin{figure}
    \centering
    \includegraphics[width=0.99\linewidth]{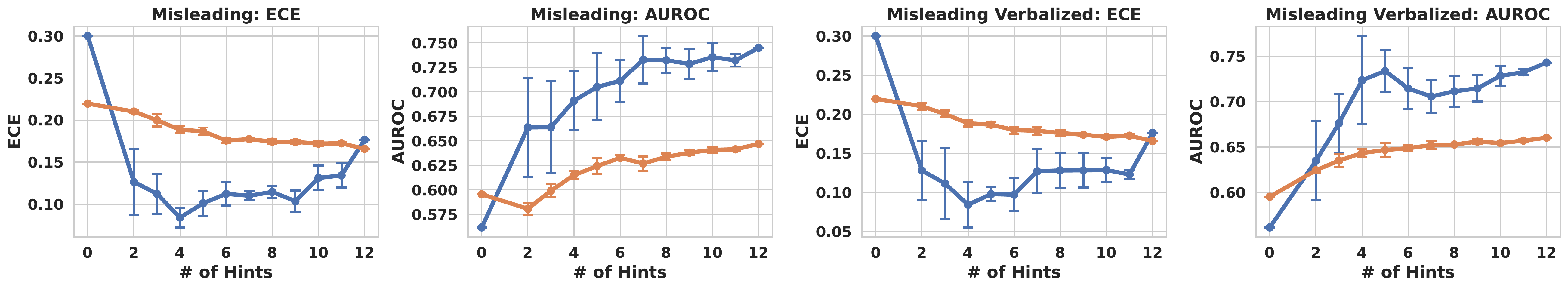}
    \caption{Impact of the number of responses responses on GPT-3.5. The sampling strategy is fixed as misleading. For every given number of misleading hints, we randomly sample the specified number of queries for 5 times and calculate the mean ECE and AUROC, and compute its variance(plotted as error bar). Note that the number of hints plus 1 is the number of responses sampled during experiment.
    }
    \label{fig:hint_impact}
\end{figure}

\subsection{Performance of different confidence elicitation methods}

\section{\note{Related Works}}
\label{sec-append:related_works}
\note{\textbf{Confidence Elicitation in LLMs.} Confidence elicitation refers to the process of estimating LLM's confidence in their responses, without relying on model fine-tuning or accessing the proprietary information of LLMs. 
Within this scope, \citet{lin2022teaching} proposes the concept of verbalized confidence that elicits the model to output confidence directly. 
However, the evaluation is tailored for pretrained language models that are fine-tuned on specific datasets, and its zero-shot verbalized confidence remains unexplored. \citet{reducingoverconfidence} proposes to train an external calibrator while relies on model representations that are not readily accessible.  
\citet{navigating} examine the impact of confidence in prompts but does not directly provide confidence to users.  
Our work aligns most closely with the concurrent study by \citet{justaskforcalibration}, which also focuses on the use of prompting strategies. However, our approach diverges by aiming to explore a broader method space, introducing a unified framework consisting of three components and conducting a systematic evaluation of strategies within each. The Top-K method, as proposed in \citep{justaskforcalibration}, serves as an instance within our framework, and its performance can be augmented when integrated with other strategies from our framework. Furthermore, our investigation extends beyond the RLHF-LMs primarily analyzed in the concurrent study, and encompasses a broader spectrum of models. This allows us to probe the implications of different model sizes and structures. 
Our findings also underscore that all existing methods still face challenges with more complex tasks, contributing to a more holistic 
understanding of confidence elicitation in the field.}

\note{\textbf{Calibration.}
Modern neural networks are shown to be poorly calibrated, often manifesting overconfidence~\citep{ guo2017calibration,minderer2021revisit,xiong2023proximity}. Calibration seeks to address the issue by aligning the model's confidence with the accuracy of samples within the same confidence level~\citep{guo2017calibration, minderer2021revisit}. To achieve this, a variety of methods have been proposed, which can be broadly divided into scaling-based methods~\citep{guo2017calibration, deng2023great, zhang2020mix} and binning-based methods~\citep{zadrozny2001obtaining,zhang2020mix}. Within the scope of LLMs, 
\cite{jiang2021can} investigates the calibration of generative language models (T5, BART, and GPT-2) and discovers that these models' probabilities on question-answering tasks are poorly calibrated. Similarly, \citet{chen2022close} finds that PLMs are not well calibrated and pretraining improves model calibration. On the other hand, \citet{mostlyknow} studies the calibration of LLMs (parameter size ranging 800M to 50B), finding that larger models appear to be well-calibrated on multiple choice and true/false questions when provided in the right format. However, these evaluations mainly focus on the probabilities derived from logits, which are unavailable for closed-source LLMs like GPT-4. This also motivates us to study confidence elicitation methods that do not require model fine-tuning or access to model logits or embeddings. }

\begin{table}[t]  
    \caption{Performance of different confidence elicitation methods: verbalize-based (Top-K and CoT  Verbalized Confidence), consistency-based (Self-Consistency and Induced consistency), and their hybrid combinations. The best-performing method for each dataset is highlighted in \highlight{bold}.}
    \label{tab:induce-consistency+verbalize-consisteny}
    
\resizebox{\textwidth}{!}{%
\begin{tabular}{llccccc|c}
\toprule
\textbf{Metric}        & \textbf{Method} & \textbf{GSM8K} & \textbf{DateUND} & \textbf{StrategyQA} & \textbf{Prf-Law} & \textbf{Biz-Ethics} &\textbf{Avg} \\
\midrule
\multirow{6}{*}{ECE $\downarrow$}   & Top-K (M=1)         & 39.8              & 40.1                 & \highlight{14.0}                    & 16.7                 & \highlight{12.4}          & 24.6          \\
                       & CoT (M=1)           & 10.1              & 23.4                 & 22.0                    & 39.7                 & 30.0     & 25.0              \\
                       \cmidrule{2-8}
                       & Self-Random+Consistency (M=5)          & \highlight{6.28}               & 17.0                 & 23.3                    & 26.0                 & 20.7    & 18.7                \\
                       & Misleading + Cons (M=5)       & 8.03               & 20.5                 & 21.8                    & 18.3                 & 17.8          & 17.3          \\                              \cmidrule{2-8}
                       & Self-Random + Avg-Conf (M=5)            & 9.28               & \highlight{14.6}                 & 15.9                    & 18.3                 & 15.8                 & 14.8   \\
                       & Misleading + Avg-Conf (M=5)            & 7.40               & 17.6                & 15.0                     & \highlight{12.8}                 & 18.2              & \textbf{14.2}      \\ \midrule
\multirow{6}{*}{ROC $\uparrow$} & Top-K (M=1)         & 59.9               & \highlight{76.3}                 & 61.3                    & 58.9                 & 73.3              & 65.9      \\
                       & CoT (M=1)           & 54.8               & 57.4                 & 59.8                    & 52.2                 & 56.0       & 56.4            \\
                       \cmidrule{2-8}
                       & Self-Random+Consistency (M=5)          & \highlight{92.7}               & 66.8                 & 60.8                    & \highlight{65.6}                 & 79.0  & 73.0                   \\
                       & Misleading + Cons (M=5)       & 88.6               & 67.3                 & 61.5                    & 59.3                 & 71.3       & 69.6             \\ \cmidrule{2-8}
                       & Self-Random + Avg-Conf (M=5)             & 92.5               & 68.8                 & \highlight{66.2}                    & 65.3                 & \highlight{79.5}   & \textbf{74.5}                  \\ 
                       & Misleading + Avg-Conf (M=5)            & 88.8               & 63.8                 & 65.6                    & 60.4                 & 72.4       & 70.2             \\ \midrule
\multirow{6}{*}{PR-P $\uparrow$}  & Top-K (M=1)         & 27.7               & 62.8                 & 68.4                    & 49.3                 & 82.2     &58.1               \\
                       & CoT (M=1)           & 81.8               & 76.6                 & 72.8                    & 49.2                 & 64.3         & 68.9           \\ \cmidrule{2-8}
                       & Self-Random+Consistency (M=5)          & 96.9               & 81.0                 & 73.7                    & 59.4                 & 82.3      & 78.7              \\
                       & Misleading + Cons (M=5)       & 95.1               & 81.0                 & 74.1                    & 54.7                 & 77.6      & 76.5              \\ \cmidrule{2-8}
                       & Self-Random + Avg-Conf (M=5)            & \highlight{97.0}               & \highlight{84.4}                 & 78.3                    & \highlight{60.3}                 & \highlight{83.1}        & \textbf{80.6}            \\
                       & Misleading + Avg-Conf (M=5)            & 95.3               & 79.0                 & \highlight{79.1}                    & 56.4                 & 80.9   & 78.1                 \\ \midrule
\multirow{6}{*}{PR-N $\uparrow$}  & Top-K (M=1)         & 80.2               & \highlight{79.8}                 & 45.7                    & 56.0                 & 50.7  & \textbf{62.5}                   \\
                       & CoT (M=1)           & 23.1               & 30.7                 & 40.5                    & 53.9                 & 43.7     & 38.4               \\ \cmidrule{2-8}
                       & Self-Random+Consistency (M=5)          & 79.7               & 44.6                 & 39.5                    & 63.8                 & 63.4      & 58.2              \\
                       & Misleading + Cons (M=5)       & 71.2               & 44.2                 & 41.3                    & 58.7                 & 55.1      & 54.1              \\ \cmidrule{2-8}
                       & Self-Random + Avg-Conf (M=5)             & \highlight{81.5}               & 51.8                 & \highlight{45.8}                    & \highlight{65.3}                 & \highlight{64.9}      & 61.9              \\ 
                       & Misleading + Avg-Conf (M=5)            & 73.5               & 42.4                 & 45.4                    & 60.9                 & 57.1   & 55.9  \\ \bottomrule               
\end{tabular}%
}
\end{table}

\section{Best Practice and Recommendations For Practitioners}
\label{sec-append:best_practice}
\note{\subsection{What is the recommendation for practitioners?} Balancing between efficiency, simplicity, and effectiveness, we recommend a stable-performing method from our empirical results as advice for practitioners: \textbf{Top-K prompt + Self-Random sampling + Avg-Conf or Pair-Rank aggregation}. The recommendation is based on: 1) Top-K outperforms all other methods on GPT-3.5 and is comparable to the top-performing method Self-Probing on GPT4. Compared to Self-Probing which requires two inference phases, the Top-K prompt is chosen for the balance between effectiveness and efficiency. 1) As shown in Sec~\ref{sec:exp-consistency}, ensemble methods (e.g., $M=5$) are consistently more effective than verbalized confidence ($M=1$) in eliciting a model's confidence. Regarding the sampling strategies, Self-Random is selected for being more straightforward and commonly used, since the performance difference of different sampling strategies is minimal.. 3) For aggregation, strategies based on both answers and verbalized confidences (e.g., Avg-Conf and Pair-Rank) outperform *aggregation based on answers only (e.g., consistency)*. Then we recommend Pair-Rank and Avg-Conf for different downstream tasks according to their relatively good performance on different metrics. For example, for tasks that prioritize the exact confidence values, like calculating expected risk, Pair-Rank is recommended, while Avg-Conf is better suited for tasks related to failure prediction, e.g., factual error detection. Additionally, it is noteworthy that using Top-K alone does not improve accuracy as much as Chain of Thought (CoT), but the use of ensemble methods compensates for this.}

\subsection{\note{What are the considerations when using black-box confidence elicitation algorithms?}}
\note{Careful consideration is necessary due to significant limitations: 1) The reliability of the given confidence must be assessed by considering multiple metrics, such as both ECE and AUROC. As discussed in \sectionautorefname{~\ref{sec:exp-prompt-strategies}}, a high ECE does not imply that the model's outputs accurately represent model correctness. Metrics including AUROC and detailed information such as the confidence distribution plot should also be considered for a comprehensive evaluation and better understanding. 2) LLMs are not explicitly modeled to express uncertainty in textual outputs, and descriptions of uncertainty in the training corpus are mostly human expressions, which are often considered inaccurate~\citep{humanuncertainty}. Dependence on such confidence for real-world applications requires careful checking, especially given the consistently high confidence levels shown in Figure~\ref{fig:distribution-verbalized-confidence}, no matter whether the question is correctly answered or not.}

\subsection{\note{Discussions on why some strategies work, and why some do not work}}

\note{In this section, we discuss the effective strategies and analyze the rationale behind these mechanisms.}

\paragraph{Sampling} \note{Consistency among multiple responses is more effective compared to verbalized confidence ($M=1$), with particularly notable improvements on the arithmetic task. This is because sampling more queries allows us to directly approximate the model's internal distribution, $P_{model}(\mathbf{x}_t|\mathbf{x}_{1:t-1})$, which is trained to mirror the ground truth data distribution. Issues making this method ineffective can be: 1) the model's poor calibration~\citep{semantic-entropy}, i.e., $P_{model}(\mathbf{x}_t|\mathbf{x}_{1:t-1})$ does not align well with $P_{data}(\mathbf{x}_t|\mathbf{x}_{1:t-1})$; or 2) the computational constraints limiting the number of sampled queries, leading to inaccurate estimates.}

\paragraph{Aggregation} \note{Aggregation based on answers and verbalized confidences (e.g., Avg-Conf and Pair-Rank) outperforms aggregation based on answers only (e.g., consistency), especially when LLM queries are costly and the number of queries we can sample is constrained. This is due to the coarse granularity of the consistency-based aggregation's output—limited to 6 possible values (0, 0.2, 0.4, 0.6, 0.8, 1) when M=5. This can lead to poor calibration performance. The verbalized confidence, despite being less precise, still captures the model's uncertainty tendency and allows for finer-grained output values, and hence can be combined to enhance calibration performance.}

\paragraph{Verbalized Confidence} \note{For verbalized confidence, we note that humans are able to verbalize their uncertainty, e.g., giving insight as to whether our answers and reasonings are correct or not. So it is reasonable to expect LLMs to have also learned this ability, or to learn it at some point in the future. The current suboptimal performance of verbalized confidence points to an important research gap, and this might be explained by the inherent inaccuracy of the training data, particularly human expressions of uncertainty. For example, as studied by~\citet{garthwaite2005statistical}, humans sometimes tend to exaggerate their a priori probability for an event that has occurred.}

\paragraph{Prompting Strategy} \note{In addition, compared to Vanilla prompt, Top-K, CoT, and Multi-Step can significantly reduce ECE in ChatGPT. We argue that the improvement is largely due to these prompt strategies enhancing the model's accuracy, which narrows the gap between average confidence and actual accuracy, rather than a significant boost in their ability to differentiate between correct and incorrect samples. This is also supported by the modest gains in AUROC and AUPRC, compared to the significant improvement in ECE.  }

\section{Experiment Setup}
\label{sec-append:exp-setup}

\subsection{Datasets}
\label{sec-append:exp-setup-datasets}

To evaluate the quality of confidence estimates in varied tasks, we select the tasks of commonsense reasoning, arithmetic calculation, symbolic reasoning, professional knowledge, and ethical knowledge as evaluation benchmarks. In detail, the datasets for each task are listed below: 
\begin{itemize}[leftmargin=2em]
    \item \textbf{Commonsense Reasoning}: Sports Understanding (SportUND) dataset \citep{SportsUnderstanding} and StrategyQA dataset \citep{geva2021did} from BigBench \citep{ghazal2013bigbench}. We select StrategyQA as the more representative dataset since it contains more data.

    \item \textbf{Arithmetic Reasoning}:  Graduate School Math (GSM8K) dataset \citep{cobbe2021training} and Simple Variations on Arithmetic Math word Problems (SVAMP) dataset \citep{patel-etal-2021-nlp}. We select GSM9K as the more representative dataset because it has a wider usage. 

    \item \textbf{Symbolic Reasoning}: Date Understanding (DateUnd) dataset \citep{DataUnderstanding} and Object Counting (ObjectCou) dataset \citep{wang2019learning} in BigBench. We select Date Understanding as the more representative dataset since it is more difficult than Object Counting.

    \item \textbf{Professional Knowledge}: Professional Law (Prf-Law) dataset from MMLU (Massive Multitask Language Understanding) ~\citep{hendrycks2021measuring}

    \item \textbf{Ethical Knowledge}: business ethics (Biz-Ethics) dataset from MMLU~\citep{hendrycks2021measuring}.
\end{itemize}

\subsection{Evaluation Metrics}
\label{sec:evaluation-metrics}
In line with previous evaluation setting in \citep{naeini2015obtaining,yuan2021large, xiong2022birds}, we use confidence calibration and failure prediction metrics to measure estimated confidence:

\begin{itemize}[leftmargin=2em]
    \item Expected Calibration Error (\textbf{ECE}): It measures the calibration of a classifier by quantifying the discrepancy between predicted probabilities and observed accuracy. 
    
    \item Area Under the Receiver Operating Characteristic curve (\textbf{AUROC}): It assesses the discriminative ability of a classifier across different classification thresholds~\citep{10.1007/978-3-642-40994-3_29}.
    
    \item Area under the Precision-Recall Curve (\textbf{AUPRC}): It measures the trade-off between precision and recall at different classification thresholds. Specifically, AUPRC-Positive measures the AUPRC for positive instances and AUPRC-Negative is for negative samples.
\end{itemize}
Specifically, calibration metrics (ECE) measure the alignment of confidence scores with the ground truth uncertainty, enabling their utilization in tasks such as risk assessment; while failure detection (AUROC and AUPOR) metrics measure whether the confidence score can appropriately differentiate correct answers and incorrect answers.
These metrics also play a crucial role in accurately assessing calibration measurements in works such as \cite{reducingoverconfidence} and \cite{SOLANO2021e72} .

\subsection{Models} 

In our experiments, we incorporate a range of representative LLMs of different scales, including Vicuna \citep{vicuna2023}, GPT3 \citep{brown2020language}, GPT3.5 (GPT3.5) \citep{OpenAIChatGPT}, and GPT4 \citep{openai2023gpt4}. The number of parameters in each model is 13 billion for Vicuna, 175 billion for GPT3, and larger for GPT3.5 and GPT4. While GPT3.5 and GPT4 have been widely acknowledged due to their outstanding performances, GPT3 is selected as a former version of them. Vicuna is a smaller model fine-tuned from LLaMA \citep{touvron2023llama}.

\begin{figure}[ht]
    \centering  
    \includegraphics[width=0.7\linewidth]{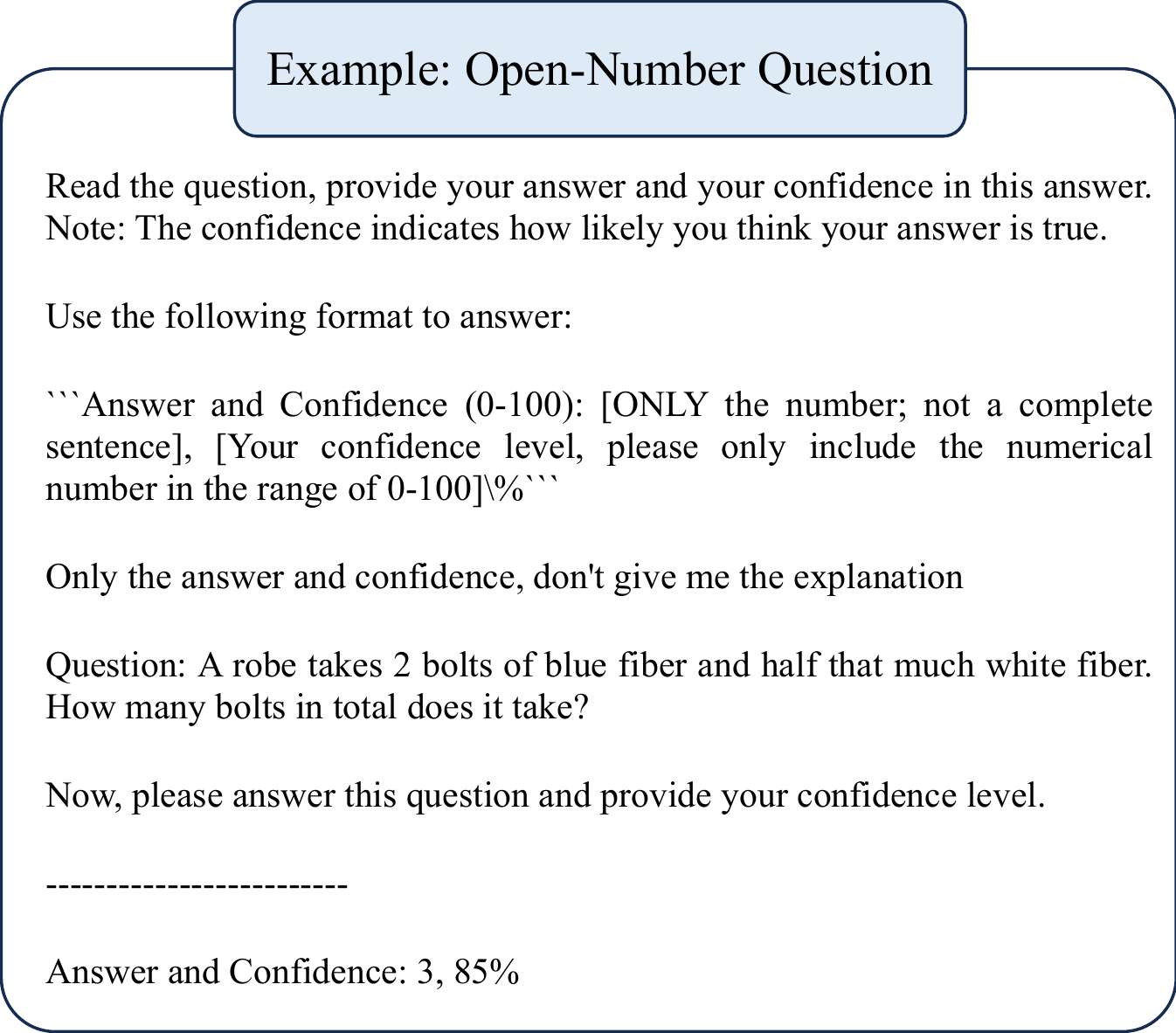}
    \caption{Example of a complete prompt and the model's output. The vanilla prompt is used. }
    \label{fig:example}
\end{figure}

\subsection{Implementation Details} 
For the use of sampling strategy, we sample $M=5$ responses. For the use of Self-Random, we set the temperature hyper-parameter as 0.7 to gather a more diverse answer set, as suggested in \citet{selfconsistency}. The p

\section{Prompts}
\label{sec:prompt}

The prompts used in our work consist of three components: the description, the question, and the misleading hints (used for misleading sampling strategy). The description part outlines the definition of the task presented to the LLMs, requesting them to provide an answer together with the confidence level for the answer. See Figure~\ref{fig:example} for a complete example of full prompt and the model's output. The detailed prompt is provided below: 
\begin{enumerate}
   \item Vanilla: Table~\ref{tab:definition_prompts_original}
   \item Chain-of-Thought-based: Table~\ref{tab:definition_prompts_cot}
   \item Self-Probing: Table~\ref{tab:definition_prompts_self_prob}
   \item Multi-Step: Table~\ref{tab:definition_prompts_multistep}
   \item Top-K: Table~\ref{tab:definition_topk_original}
\end{enumerate}

\begin{table}[b]
    \centering
    \captionsetup{justification=centering, width=0.9\linewidth}
    \caption{The designed vanilla prompt for two different tasks. }\textbf{}
    \begin{tabular}{>{\centering\arraybackslash}m{0.2\textwidth} | m{0.7\textwidth}}
    \toprule
    \rowcolor{gray!25} 
    Task & \multicolumn{1}{c}{Vanilla Prompt} \\
    \hline
    Multi-choice questions & 
    Read the question, provide your answer and your confidence in this answer. Note: The confidence indicates how likely you think your answer is true.
    \newline
    \newline
    Use the following format to answer:
    \newline
    \newline
    ```Answer and Confidence (0-100): [ONLY the option letter; not a complete sentence], [Your confidence level, please only include the numerical number in the range of 0-100]\%'''
    \newline
    Only the answer and confidence, don't give me the explanation.
    \newline
    \newline
    Question:[Specific Question Here]
    \newline
    \newline
    Now, please answer this question and provide your confidence level.
    \\ \midrule
    Open-number questions & Read the question, provide your answer and your confidence in this answer. Note: The confidence indicates how likely you think your answer is true.
    \newline
    \newline
    Use the following format to answer:
    \newline
    \newline
    ```Answer and Confidence (0-100): [ONLY the number; not a complete sentence], [Your confidence level, please only include the numerical number in the range of 0-100]\%'''
    \newline
    \newline
    Only the answer and confidence, don't give me the explanation.
    \newline
    \newline
    Question:[Specific Question Here]
    \newline
    \newline
    Now, please answer this question and provide your confidence level.
    \\ \bottomrule
    \end{tabular}
    \label{tab:definition_prompts_original}
\end{table}

\begin{table}[t]
    \centering
    \captionsetup{justification=centering, width=0.9\linewidth}
    \caption{The prompt designed for Chain-of-Thought prompting strategy.}\textbf{}
    \begin{tabular}{>{\centering\arraybackslash}m{0.2\textwidth} | m{0.7\textwidth}}
    \toprule
    \rowcolor{gray!25} 
    Tasks & \multicolumn{1}{c}{Definitions of Tasks in Prompts in Chain-of-Thought Confidence} \\
    \hline
    Multi-choice questions & Read the question, analyze step by step, provide your answer and your confidence in this answer. Note: The confidence indicates how likely you think your answer is true.
    \newline
    \newline
    Use the following format to answer:
    \newline
    \newline
    ```Explanation: [insert step-by-step analysis here]\newline
    Answer and Confidence (0-100): [ONLY the option letter; not a complete sentence], [Your confidence level, please only include the numerical number in the range of 0-100]\%'''
    \newline
    \newline
    Only give me the reply according to this format, don't give me any other words.
    \newline
    \newline
    Question:[Specific Question Here]
    \newline
    \newline
    Now, please answer this question and provide your confidence level. Let's think it step by step.
    \\ \midrule
    Open-number questions & Read the question, analyze step by step, provide your answer and your confidence in this answer. Note: The confidence indicates how likely you think your answer is true.
    \newline
    \newline
    Use the following format to answer:
    \newline
    \newline
    ```Explanation: [insert step-by-step analysis here]\newline
    Answer and Confidence (0-100): [ONLY the number; not a complete sentence], [Your confidence level, please only include the numerical number in the range of 0-100]\%'''
    \newline
    \newline
    Only give me the reply according to this format, don't give me any other words.
    \newline
    \newline
    Question:[Specific Question Here]
    \newline
    \newline
    Now, please answer this question and provide your confidence level. Let's think it step by step.
    \\ \bottomrule
    \end{tabular}
    \label{tab:definition_prompts_cot}
\end{table}

\begin{table}[t]
    \centering
    \captionsetup{justification=centering}
    \caption{The prompt designed for self-probing prompting strategy. }
    \begin{tabular}{|m{0.9\textwidth}|}
    \toprule
    \rowcolor{gray!25} 
    The prompt designed for self-probing prompting strategy \\
    \midrule
    Question: [The specific question]
    \newline
    \newline
    Possible Answer: [The answer candidates]
    \newline
    \newline
    Q: How likely is the above answer to be correct? Please first show your reasoning concisely and then answer with the following format:
    \newline
    \newline
    ```Confidence: [the probability of answer \{answer\} to be correct, not the one you think correct, please only include the numerical number]'''  
    \\ \bottomrule
    \end{tabular}
    \label{tab:definition_prompts_self_prob}
\end{table}

\begin{table}[t]
    \centering
    \captionsetup{justification=centering, width=0.9\linewidth}
    \caption{The designed prompt for multi-step prompting strategy.}\textbf{}
    \begin{tabular}{>{\centering\arraybackslash}m{0.2\textwidth} | m{0.7\textwidth}}
    \hline
    \rowcolor{gray!25} 
    \multicolumn{2}{c}{The designed prompt for multi-step prompting strategy} \\
    \hline
    Question & Read the question, break down the problem into K steps, think step by step, give your confidence in each step, and then derive your final answer and your confidence in this answer. Note: The confidence indicates how likely you think your answer is true.
    \newline
    \newline
    Use the following format to answer:
    \newline 
    ```Step 1: [Your reasoning], Confidence: [ONLY the confidence value that this step is correct]\%
    \newline
    ...
    \newline
    Step K: [Your reasoning], Confidence: [ONLY the confidence value that this step is correct]\%
    \newline
    Final Answer and Overall Confidence (0-100): [ONLY the {answer type}; not a complete sentence], [Your confidence value]\%'''
    \\ \bottomrule
    \end{tabular}
    \label{tab:definition_prompts_multistep}
\end{table}

\begin{table}[t]
    \centering
    \captionsetup{justification=centering, width=0.9\linewidth}
    \caption{Prompts used to elicit Top-K Verbalized Confidence.}\textbf{}
    \begin{tabular}{>{\centering\arraybackslash}m{0.2\textwidth} | m{0.7\textwidth}}
    \hline
    \rowcolor{gray!25} 
    \multicolumn{2}{c}{The designed prompt for Top-K prompting strategy.} \\
    \hline
    Question & Provide your {k} best guesses and the probability that each is correct (0\% to 100\%) for the following question. Give ONLY the {task output description} of your guesses and probabilities, no other words or explanation. For example:

G1: <ONLY the {task output description} of first most likely guess; not a complete sentence, just the guess!>
P1: <ONLY the probability that G1 is correct, without any extra commentary whatsoever; just the probability!> 

...

G{k}: <ONLY the {task output description} of {k}-th most likely guess>
P{k}: <ONLY the probability that G{k} is correct, without any extra commentary whatsoever; just the probability!>

    \\ \bottomrule
    \end{tabular}
    \label{tab:definition_topk_original}
\end{table}

\end{document}